\documentclass[10.5pt]{article}

\usepackage{paralist}
\usepackage{amsthm}

\usepackage{amsmath,amsfonts,bm}

\theoremstyle{plain}
\newtheorem{theorem}{Theorem}

\theoremstyle{definition}

\newtheorem{corollary}[theorem]{Corollary}

\theoremstyle{remark}
\newtheorem{remark}[theorem]{Remark}

\numberwithin{equation}{section}
\numberwithin{theorem}{section} 


\newenvironment{boxtheorem}[1][]{
  \refstepcounter{theorem}
  \begin{tcolorbox}[
    colback=blue!2!white,
    colframe=blue!50!black,
    enhanced,
    sharp corners,
    title={Theorem~\thetheorem%
      \if\relax\detokenize{#1}\relax
      \else:~#1%
      \fi}
  ]
}{
  \end{tcolorbox}
}

\newenvironment{boxdefinition}[1][]{
  \refstepcounter{theorem}
  \begin{tcolorbox}[
    colback=red!2!white,
    colframe=red!50!black,
    enhanced,
    sharp corners,
    title={Definition~\thetheorem%
      \if\relax\detokenize{#1}\relax
      \else:~#1%
      \fi}
  ]
}{
  \end{tcolorbox}
}

\newenvironment{boxinsight}[1][]{
  \begin{tcolorbox}[
    colback=red!2!white,
    colframe=red!50!black,
    enhanced,
    sharp corners,
    title={#1}
  ]
}{
  \end{tcolorbox}
}










\def\eqref#1{equation~\ref{#1}}









\def\1{\bm{1}}








\def\vx{{\bm{x}}}

\def\vz{{\bm{z}}}



\DeclareMathAlphabet{\mathsfit}{\encodingdefault}{\sfdefault}{m}{sl}
\SetMathAlphabet{\mathsfit}{bold}{\encodingdefault}{\sfdefault}{bx}{n}











\newcommand{\R}{\mathbb{R}}



\DeclareMathOperator*{\argmax}{arg\,max}

\usepackage{wrapfig}
\usepackage{mathpazo}
\usepackage[utf8]{inputenc}
\usepackage[T1]{fontenc}
\usepackage{verbatim}
\usepackage{calc}
\usepackage{mathrsfs}
\usepackage{mathtools}
\usepackage{algorithmic}
\usepackage{url}
\usepackage[ruled,vlined]{algorithm2e}  
\RestyleAlgo{ruled}   
\SetAlgoLined         
\usepackage{amsmath}
\usepackage{amssymb}
\usepackage{graphicx}
\usepackage{hyperref}
\usepackage[table]{xcolor} 
\usepackage{comment}
\usepackage{soul}
\usepackage{booktabs}
\usepackage{tikz}
\usetikzlibrary{matrix,arrows.meta,positioning}
\usepackage[most]{tcolorbox}

\definecolor{forestgreen}{rgb}{0.13, 0.55, 0.13}

\usepackage{tabularx}

\makeatletter

\newcommand{\lyxmathsym}[1]{\ifmmode\begingroup\def\b@ld{bold}
  \text{\ifx\math@version\b@ld\bfseries\fi#1}\endgroup\else#1\fi}



\usepackage{graphics}\usepackage{epsfig}

\usepackage{soul}

\usepackage{graphicx}
\usepackage{booktabs} 

\usepackage{here}

\usepackage{amsmath,amssymb,amsfonts,amsbsy,amsfonts,latexsym}
\usepackage{multirow}
\usepackage{makecell}
\usepackage[labelfont=bf,belowskip=0pt,aboveskip=5pt,tableposition=top]{caption}
\usepackage{xcolor}
\usepackage{colortbl}

\definecolor{colorA}{RGB}{189,201,225}
\definecolor{colorB}{RGB}{103,169,207}
\definecolor{colorC}{RGB}{ 28,144,153}
\definecolor{colorD}{RGB}{  1,108, 89}

\newcolumntype{R}{>{\columncolor{gray!40}}r}
\newcolumntype{L}{>{\columncolor{gray!40}}l}
\newcolumntype{C}{>{\columncolor{gray!40}}c}

\usepackage{tabularx,colortbl,xcolor}
\usepackage{multirow}
\usepackage[normalem]{ulem}
\useunder{\uline}{\ul}{}

\usepackage{enumitem}

\usepackage{xparse}

\captionsetup[table]{name=Table}
\DeclareGraphicsExtensions{.pdf,.png}

\SetKwInput{KwInput}{Input}

\usepackage{longtable}
\usepackage{pgfplots}
\usepackage{outlines}

\usepackage{caption}
\usepackage{subcaption}
\usepackage{graphbox} 

\tcbset{
    sharp corners,
    colback = white,
    before skip = 0.2cm,    
    after skip = 0.5cm      
}                           

\definecolor{main}{HTML}{4472C4}    
\definecolor{sub}{HTML}{EBF4FF}     

\newtcolorbox{boxA}{
    enhanced, breakable,
    boxrule = 0pt,
    colback = sub,
    borderline west = {2pt}{0pt}{main}, 
    borderline east = {2pt}{0pt}{main}, 
}

\textwidth6.5in
\oddsidemargin0in
\evensidemargin0in
\textheight9in
\topmargin-0.6in

\usepackage{times}
\usepackage{textcomp}

\newcommand{\OURS}{{\textsc{Arbitrage}}\xspace}
\newcommand{\OURSORACLE}{{\textsc{Arbitrage Oracle}}\xspace}
\newcommand{\OURSROUTER}{{\textsc{Arbitrage Router}}\xspace}

\pagestyle{plain}
\pagenumbering{arabic}

\title{\OURS: Efficient Reasoning via Advantage-Aware Speculation}

\author{
Monishwaran Maheswaran\thanks{Equal Contribution}\hspace{0.5em}$^{1}$\enspace\enspace Rishabh Tiwari\footnotemark[1]\hspace{0.5em}$^{1}$\enspace\enspace Yuezhou Hu\footnotemark[1]\hspace{0.5em}$^{1}$\\  Kerem Dilmen$^{1}$ \enspace\enspace Coleman Hooper$^{1}$ \enspace\enspace Haocheng Xi$^{1}$ \enspace\enspace Nicholas Lee$^{1}$ \\
Mehrdad Farajtabar$^{2}$\enspace\enspace Michael W. Mahoney$^{1,3,4}$\enspace\enspace Kurt Keutzer$^{1}$\enspace\enspace Amir Gholami$^{1,3}$\vspace{3mm}
\\
{$^{1}$~UC Berkeley\qquad $^{2}$~Apple\qquad $^{3}$~ICSI\qquad $^{4}$~LBNL\vspace{3mm}}\\
}
\date{}  
\makeatother

\begin{document}

\maketitle

\begin{abstract}
Modern Large Language Models achieve impressive reasoning capabilities with long Chain of Thoughts, but they incur substantial computational cost during inference, and this motivates techniques to improve the performance-cost ratio. 
Among these techniques, Speculative Decoding accelerates inference by employing a fast but inaccurate draft model to auto-regressively propose tokens, which are then verified in parallel by a more capable target model. 
However, due to unnecessary rejections caused by token mismatches in semantically equivalent steps, traditional token-level Speculative Decoding struggles in reasoning tasks. 
Although recent works have shifted to step-level semantic verification, which improve efficiency by accepting or rejecting entire reasoning steps, existing step-level methods still regenerate many rejected steps with little improvement, wasting valuable target compute. 
To address this challenge, we propose \OURS, a novel step-level speculative generation framework that routes generation dynamically based on the \textit{relative advantage} between draft and target models. Instead of applying a fixed acceptance threshold, \OURS uses a lightweight router trained to predict when the target model is likely to produce a meaningfully better step. 
This routing approximates an ideal \OURSORACLE that always chooses the higher-quality step, achieving near-optimal efficiency–accuracy trade-offs. 
Across multiple mathematical reasoning benchmarks, \OURS consistently surpasses prior step-level SD baselines, reducing inference latency by up to $\sim2\times$ at matched accuracy.
Our code is available at \url{https://github.com/SqueezeAILab/Arbitrage}
\end{abstract}

\section{Introduction}
\begin{figure*}[t]
  \centering
  \includegraphics[width=\linewidth, trim=2 2 2 2, clip]{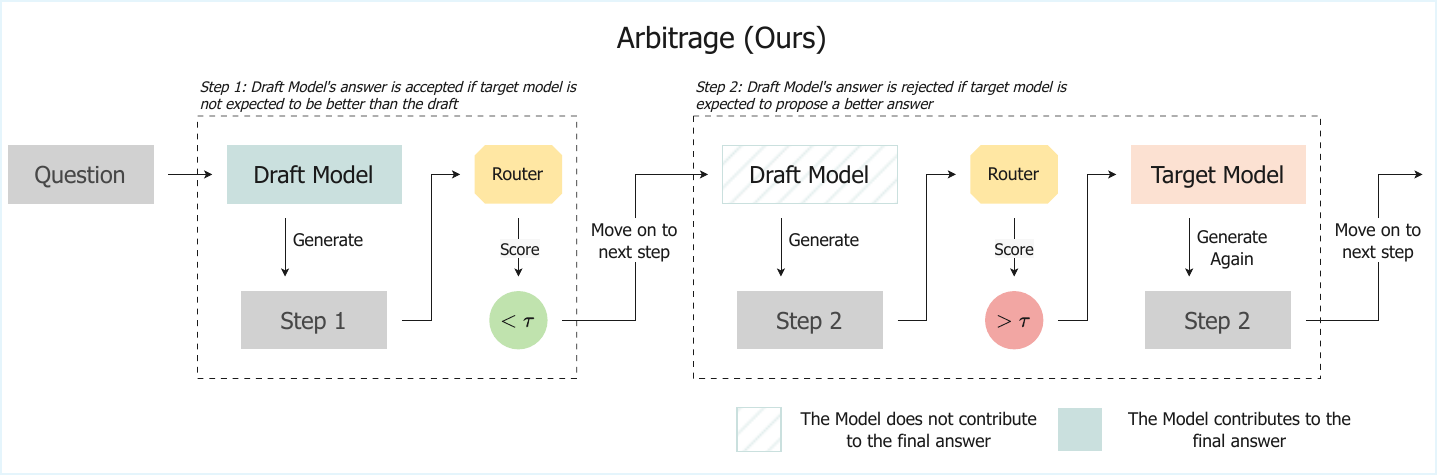}
  \vspace{2pt}
\caption{\textbf{\OURS overview.} At each reasoning step, the draft proposes a candidate. The router produces a score $\hat{y}$, which is the estimated probability that the target will outperform the draft on this step, and \emph{accepts} the draft if $\hat{y}\le\tau$, otherwise \emph{escalates} to the target to regenerate ($\hat{y}>\tau$). The selected step is appended to the context. The threshold $\tau$ governs the compute–quality trade-off.}
  \label{fig:arbitrage_intro}
\end{figure*}

Large Language Models (LLMs) have achieved remarkable progress across a wide range of domains, with mathematical reasoning standing out as a particularly transformative frontier \cite{yang2024qwen25mathtechnicalreportmathematical,shao2024deepseekmathpushinglimitsmathematical,qwq32b,yang2025qwen3technicalreport,deepseekai2025deepseekr1incentivizingreasoningcapability}. 
This rapid advancement has been fueled by the release of high-quality, reasoning-centric datasets \cite{zhou2025megamathpushinglimitsopen,guha2025openthoughtsdatarecipesreasoning,openr1-math-220k,gao2024omnimathuniversalolympiadlevel} as well as innovations in reinforcement learning algorithms tailored for complex reasoning tasks \cite{shao2024deepseekmathpushinglimitsmathematical,zheng2025groupsequencepolicyoptimization,yu2025dapoopensourcellmreinforcement}. As a result, state-of-the-art LLMs now rival, or even surpass, human performance on challenging mathematics and coding benchmarks, including international olympiad-level tasks \cite{he2024olympiadbenchchallengingbenchmarkpromoting,sun2025challengingboundariesreasoningolympiadlevel,zhu2025oibenchbenchmarkingstrongreasoning,jain2024livecodebenchholisticcontaminationfree}. Despite these impressive capabilities, the \emph{inference efficiency} of transformer-based LLMs remains a critical bottleneck. 
While training is typically compute-bound, auto-regressive decoding is fundamentally memory-bound: each token generation step relies on matrix-vector multiplications rather than the more hardware-efficient matrix-matrix operations. Consequently, the throughput of modern GPUs is limited not by raw compute power, but by global memory bandwidth, a phenomenon widely known as the “memory wall” \cite{gholami2024ai}. This constraint severely limits token generation speed and end-to-end latency, and the problem is exacerbated in long chain-of-thought reasoning, where solutions often span hundreds or thousands of tokens, leading to many decoding steps and substantial latency.
In this work, we focus on \emph{when} to spend expensive compute, rather than \emph{how} to make each forward pass cheaper. A more holistic strategy for accelerating end-to-end generation is Speculative Decoding (SD) \cite{leviathan2023fastinferencetransformersspeculative,chen2023acceleratinglargelanguagemodel}, which leverages parallelism by pairing a small, fast \textit{draft model} with a larger, more accurate \textit{target model}. The draft model auto-regressively generates multiple tokens in sequence, while the target model verifies them in parallel via a single forward pass; accepted tokens are appended to the output, and rejected tokens trigger fallback to the target model. In this paradigm, the central question is: \emph{when is it actually beneficial to switch from the draft to the target model?} Ideally, we should only invoke the target on those reasoning steps where it is expected to provide a meaningfully better continuation than the draft. 

Classical SD methods answer this question at the \emph{token level}, accepting or rejecting tokens based on exact agreement between draft and target distributions. 
Although effective in some settings, token-level SD suffers from low acceptance rates in complex reasoning tasks, particularly in chain-of-thought generation, where minor token-level discrepancies can lead to premature rejection of otherwise high-quality, semantically-equivalent reasoning steps \cite{liao2025rewardguidedspeculativedecodingefficient, pan2025specreason}, thus wasting computation. 
To address this challenge, recent work has shifted toward \textit{step-level} Speculative Decoding. 
Notably, Reward-guided Speculative Decoding (RSD) \cite{liao2025rewardguidedspeculativedecodingefficient} evaluates entire reasoning steps (e.g., delimited by \texttt{\textbackslash n\textbackslash n}) using a Process Reward Model (PRM) and accepts a draft step if its \emph{absolute} PRM score exceeds a global threshold, otherwise discarding it and regenerating with the target model. 
This thresholding rule depends only on whether the draft looks ``good enough'' in isolation, rather than on whether the target is actually expected to produce a \emph{better} step, making the routing decision fundamentally \emph{advantage-blind} and often triggering costly target regenerations that yield little or no quality gain.

In this work, we address this issue by proposing \OURS, a novel step-level speculative generation framework that dynamically routes between the draft and target models to minimize redundant computation while preserving, or even enhancing, output quality; see Figure~\ref{fig:arbitrage_intro} for an overview. 
Our key insight is to base the acceptance decision not just on the quality of the draft output (independent of the target model), but on the \textit{expected quality difference} between the draft and target models for a particular reasoning step. 
Conceptually, we define an \OURSORACLE that, for each step, compares the draft- and target-generated continuations and greedily selects the higher-quality one according to a given metric. 
Since running this oracle naively would require executing the target on every step, we instead train a lightweight \OURSROUTER that, given partial context, predicts how much a target-generated step is likely to outperform its draft counterpart. 
By anchoring decisions on the \emph{expected quality advantage}, \OURS avoids unnecessary target generations when the target model is expected to yield a similar-quality output as that of the draft model.

In summary, our main contributions are as follows.
\begin{itemize}
    \item 
    We systematically analyze the computational waste inherent in existing step-level SD methods, showing that a large fraction of regenerations fail to improve output quality (see Section \ref{sec:limitations}).
    \item 
    Based on our analysis, we propose \OURS, a step-level speculative generation framework that routes decisions based on the \emph{expected quality advantage} between the draft and target models, rather than an absolute quality threshold (see Section \ref{sec:method}).
    \item 
    We introduce a formal \OURSORACLE, which, at each reasoning step, compares the draft and target generated continuations and selects the higher-quality one according to a given metric. This defines a myopic (greedy) policy that is locally optimal for the per-step routing decision (see Section \ref{sec:oracle}).
    \item 
    We introduce a principled training pipeline for the \OURSROUTER, which is a practical lightweight model that can mimic \OURSORACLE during inference (see Section \ref{sec:router}).
    \item 
    We evaluate \OURS on different benchmarks and model settings, showing that it reduces inference latency by up to $\sim2\times$ over baselines for a given accuracy target (see Sections \ref{sec:algo_results} and \ref{sec:speedup}). 
\end{itemize}
\section{Related Work}

\subsection{Chain-of-Thought Reasoning}  
Chain-of-Thought (CoT) reasoning has been shown to improve model performance without requiring additional training. 
The simplest approach uses prompt engineering, e.g., instructing models to ``think step by step'' to elicit multi-step reasoning in instruction-following LLMs \cite{wei2023chainofthoughtpromptingelicitsreasoning}. More advanced methods employ search algorithms \cite{snell2024scalingllmtesttimecompute,zhang2024accessinggpt4levelmathematical,muennighoff2025s1simpletesttimescaling} or reward models \cite{lightman2023letsverifystepstep,xu2025genarmrewardguidedgeneration,zhao2025genprmscalingtesttimecompute} to guide generation toward higher-quality reasoning paths. To directly enhance inherent reasoning capabilities of models, recent work applies Supervised Fine-Tuning (SFT) and Reinforcement Learning (RL)\cite{pang2025boltbootstraplongchainofthought,yeo2025demystifyinglongchainofthoughtreasoning,deepseekai2025deepseekr1incentivizingreasoningcapability,shao2024deepseekmathpushinglimitsmathematical,khatri2025art,yang2024qwen25mathtechnicalreportmathematical,guha2025openthoughtsdatarecipesreasoning}. These training-based approaches significantly boost model capacity, enabling smaller models to match or even surpass larger counterparts in reasoning performance.

\subsection{Speculative Decoding for Reasoning}  
Speculative Decoding (SD) \cite{leviathan2023fastinferencetransformersspeculative} accelerates autoregressive generation by using a small, fast \textit{draft model} to propose tokens that are then verified in parallel by a larger, more accurate \textit{target model}. 
Traditional SD methods operate at the token level \cite{kim2023speculativedecodingbiglittle,liu2024onlinespeculativedecoding,zhou2024distillspecimprovingspeculativedecoding,tiwari2025quantspecselfspeculativedecodinghierarchical,ankner2024hydrasequentiallydependentdraftheads,li2025eagle3scalinginferenceacceleration,cai2024medusasimplellminference,hu2025adaspecselectiveknowledgedistillation}. 
However, token-level alignment results in premature repeated rejections in reasoning tasks where semantic level equivalence is enough. 
To address this, recent work shifts verification from tokens to entire reasoning steps.
This is typically delimited by structural markers such as double newlines (\texttt{\textbackslash n\textbackslash n}) in CoT outputs \cite{liao2025rewardguidedspeculativedecodingefficient} or phrases like ``wait'' or ``let me rethink'' in reasoning chains \cite{yang2025speculativethinkingenhancingsmallmodel, pan2025specreason}. 
This step-level approach avoids discarding an entire high-quality reasoning trajectory due to a single rejected intermediate token, thereby improving both efficiency and robustness. 

\subsection{Reward Models for Reasoning Guidance}  
Reward Models (RMs) play a central role in evaluating and guiding speculative generation by scoring output quality. Outcome-supervised Reward Models (ORMs) assign rewards based solely on the correctness of the final answer \cite{zhu2024starlingb,liu2024skyworkrewardbagtricksreward}. In contrast, Process-supervised Reward Models (PRMs) ~\cite{he_2024_16998085,wang2024math,zhu2025retrievalaugmentedprocessrewardmodel,uesato2022solvingmathwordproblems,lightman2023letsverifystepstep} provide fine-grained, per-step quality assessments of the reasoning process itself. By evaluating intermediate logical steps, PRMs enable more nuanced feedback and are particularly well-suited for step-level SD frameworks. In this work, we focus on CoT-style reasoning, where PRMs provide reliable step-level signals.

\section{Preliminaries}
\label{sec:preliminaries}
We begin by formalizing reward-based speculative generation through the lens of a simple guiding principle: we should only route from the draft model to the target model when doing so is expected to \emph{help}—that is, when the target is likely to provide a strictly better reasoning step than the draft under the same evaluation signal. Concretely, we consider step-level speculative generation, in which a Process Reward Model (PRM) scores entire candidate reasoning steps and these scores are used to decide whether to accept the draft or regenerate with the target. Prior work, illustrated by RSD \cite{liao2025rewardguidedspeculativedecodingefficient}, fits into this framework but makes routing decisions based solely on the draft step’s PRM score compared against a fixed global threshold. Whenever the draft score falls below this threshold, the system invokes the target model, regardless of whether the target is actually likely to improve the step. This absolute-score rule often triggers costly target regenerations that yield little or no quality gain. In Section \ref{sec:overview_rsd} , we review this baseline in detail; in Section \ref{sec:limitations}, we analyze its computational and behavioral limitations, quantifying the resulting wasted compute. These observations motivate a \emph{relative}, advantage-based routing strategy (discussed in Section \ref{sec:method}), in which the draft-to-target switch is explicitly tied to the target model’s \emph{expected advantage} over the draft.

\subsection{Overview of RSD}
\label{sec:overview_rsd}
Like classical SD, RSD employs a lightweight draft model in conjunction with a more powerful target model during text generation. A key distinguishing feature of RSD is its integration of a PRM that provides step-level supervision. By operating at a larger granularity through the use of reward signals from a PRM, RSD can adaptively accept high-value draft outputs, rather than rejecting them due to token-level mismatch. At each generation step, the draft model proposes a candidate reasoning step, which is then scored by the PRM. Based on this score, the system decides whether to accept the draft output or fallback to the target model for regenerating the step. However, unlike classical SD, because the PRM evaluates the entire reasoning step holistically rather than token-by-token, it avoids the computational overhead of fine-grained token-wise verification, thereby maintaining efficiency. 

Formally, let the input context be a sequence of tokens $\vx = [x_1, x_2, \dots, x_t]$. The draft model $\theta_{\text{draft}}$ or target model $\theta_{\text{target}}$ can autoregressively generate a candidate reasoning step $\vz = \{z_i\}_{i=1}^{\gamma}$ until it produces a designated separator token (e.g., \texttt{\textbackslash n\textbackslash n}), which marks the end of the step. 
This generation process is defined as:
\[
z_{i} \sim q_{\theta_{\text{model}}}(z_i \mid \vx, \vz_{<i}), \quad \text{for } i = 1, \dots, \gamma,
\]
where $\vz_{<i} = [z_1, \dots, z_{i-1}]$ and $z_\gamma$ is the separator token. 
Here, $\theta_{\text{model}}$ can be either $\theta_{\text{draft}}$ or $\theta_{\text{target}}$. 
We use $\vz_d$ to denote the draft generated step and $\vz_t$ be the target generated step. 
We first generate $\vz_d$, which is then evaluated by the PRM, defined as $h_{\theta_{\text{PRM}}}:\mathcal{X}\times\mathcal{Z}\to\R$. 
The PRM model outputs a scalar reward: 
\[
s_d \;=\; h_{\theta_{\text{PRM}}}(\vx,\vz_d).
\]
This score is used to decide whether to accept the speculation or reject it. The binary \emph{accept} indicator is
\[
A(\vx,\vz_d)\;=\;\mathbb{I}\{s_d>\tau\},\qquad \tau\in\R.
\]
That is, the step $\vz_d$ is accepted if $s_d > \tau$ (where $\tau$ is a fixed threshold); otherwise, $\vz_d$ is discarded and $\vz_t$ is generated by the target model. For a question that requires $n$ reasoning steps, the empirical acceptance rate is $\alpha = \frac{1}{n} \sum_{i=1}^n A_i$, where $A_i$ is the accept indicator of step $i$. Crucially, $\alpha$ is \textit{inversely} related to $\tau$: a higher threshold yields \textit{lower} acceptance (fewer steps accepted), trading speed for quality. By tuning $\tau$, RSD controls the efficiency–accuracy trade-off. This PRM-guided mechanism dynamically balances computational cost and output quality. 

\subsection{Limitations of Current Approach} 
\begin{figure*}[t]
  \centering
  \includegraphics[width=\linewidth, trim=10 40 10 20, clip]{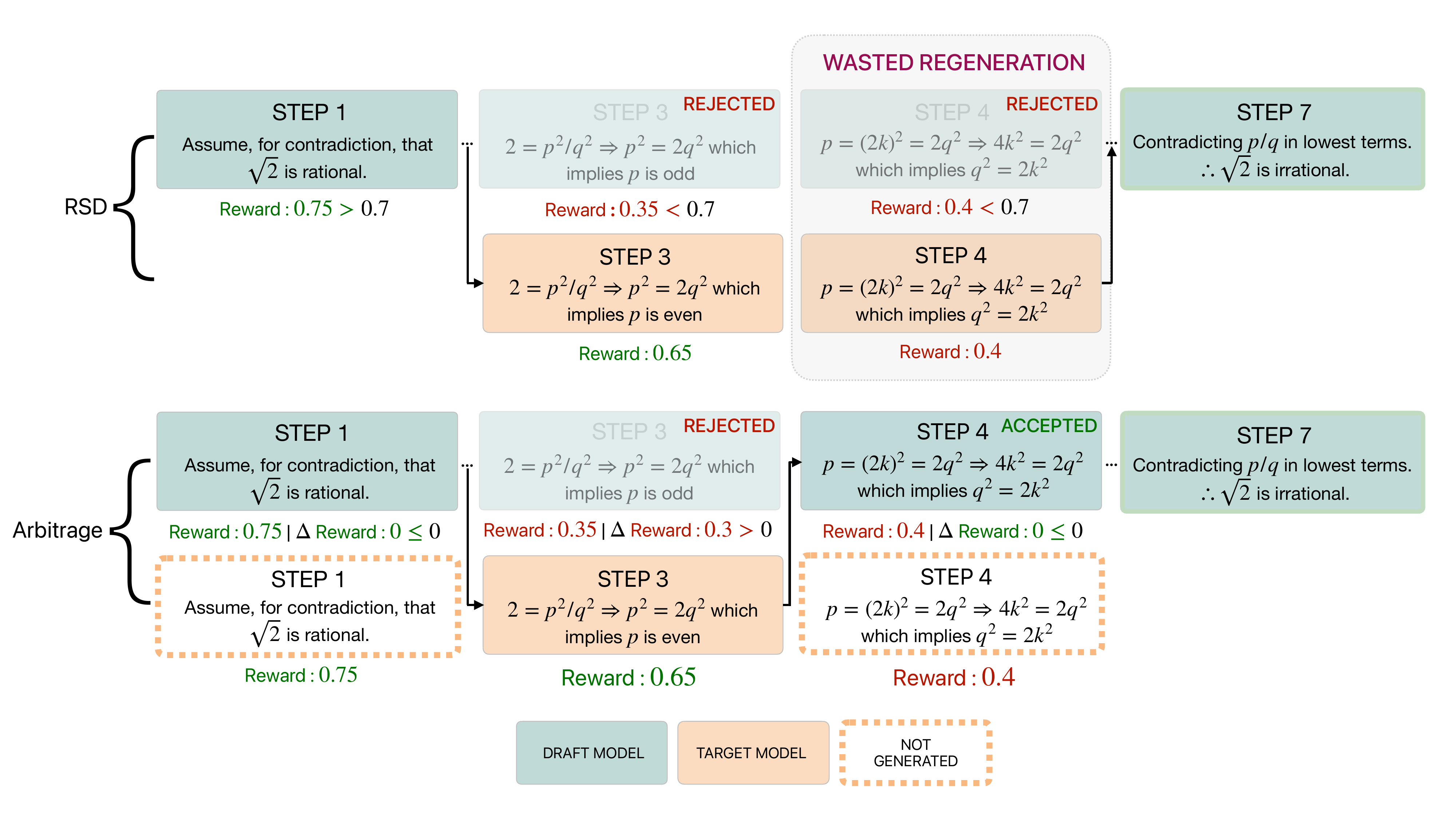}
  \vspace{-2pt}
  \caption{\textbf{\OURS vs.\ baseline step-level SD approaches.}
  Comparison of Reward-guided Speculative Decoding (RSD, top, which we use as a baseline) and our \OURS algorithm (bottom).
  RSD accepts or rejects draft steps using an absolute PRM reward threshold: when the PRM score of a draft-generated step falls below this threshold, the step is discarded and the target model is invoked to regenerate it.
  This absolute criterion can trigger unnecessary target regenerations (e.g., Step 4), where the target does not significantly improve the quality of the step.
  \OURS instead estimates the expected quality gain from \emph{escalating} a step, i.e., invoking the larger target model to regenerate the step rather than keeping the draft step.
  It only calls the target when this predicted gain is positive, thereby avoiding wasted target calls (e.g., Steps 1 and 4).}
  \label{fig:arbitrage_method}
\end{figure*}

\begin{figure}[t]
    \centering
    \includegraphics[width=0.5\linewidth]{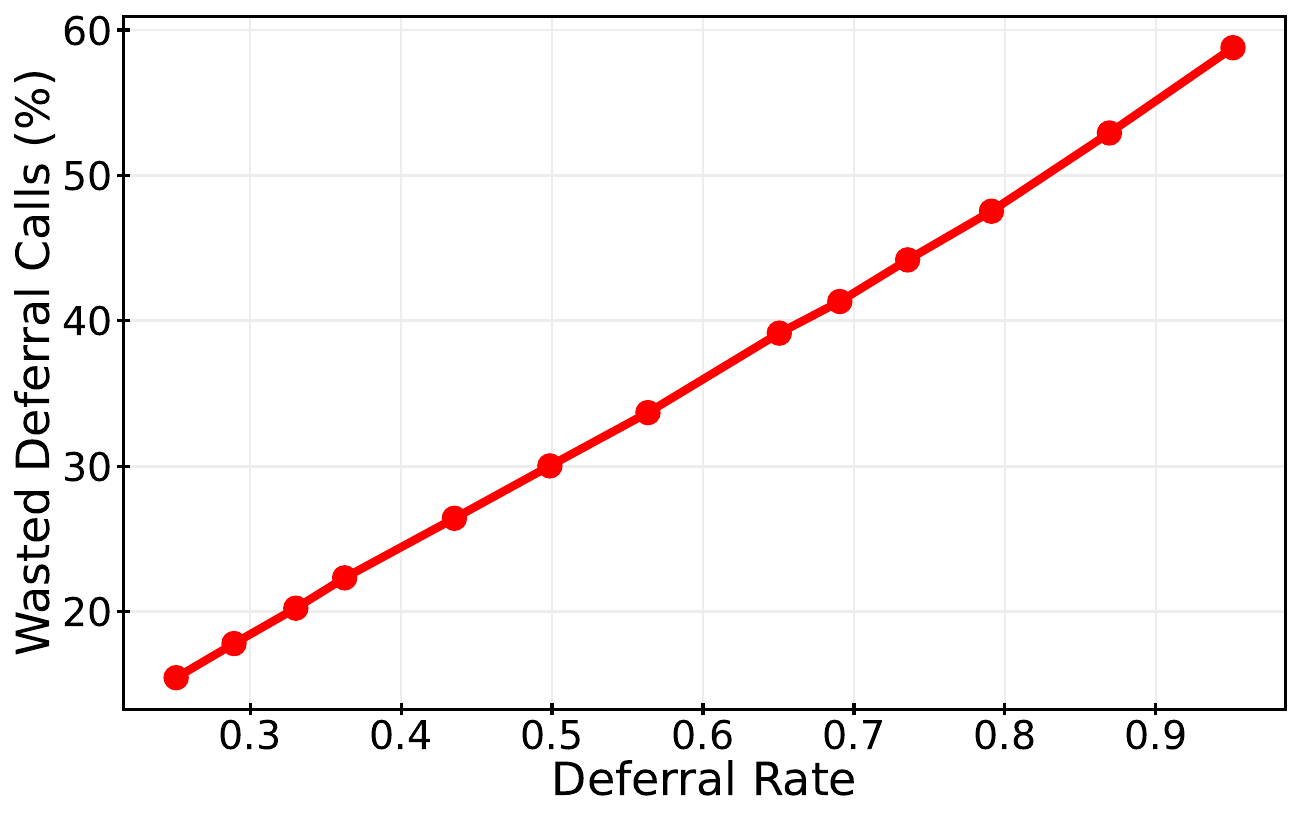}
    \caption{\textbf{Wasted target calls vs.\ deferral rate.} 
    In reward-based step-level speculation, the \emph{deferral rate} (x-axis) is the fraction of reasoning steps that are escalated to the target model. 
    The \emph{wasted deferral rate} (y-axis) is the percentage of those escalations where the target’s step is \emph{no better} than the draft (equal or lower PRM score), relative to total number of steps. Wasted compute increases steadily with deferral rate, indicating many unnecessary target invocations under absolute-score rejection.}
    \label{fig:rsd-limitation}
\end{figure}

\label{sec:limitations}
While \cite{liao2025rewardguidedspeculativedecodingefficient} demonstrates the effectiveness of reward-based speculative generation, we identify a fundamental inefficiency in its rejection mechanism. When the PRM rejects a draft step based on its reward score, regenerating that step with the target model often provides little or no improvement in quality. To build intuition for how this mechanism behaves, Figure~\ref{fig:arbitrage_method} presents a schematic example of step-level decisions under RSD and our method. Crucially, this regeneration incurs the full computational cost, despite yielding negligible quality gains, and this leads to substantial wasted compute. To illustrate this, let $s_t = h_{\theta_{\text{PRM}}}(\vx,\vz_t)$ be the corresponding reward for the target generated step if $\vz_d$ is rejected ($A(\vx,\vz_d))=0$). The regeneration incurs cost $C_t$, while the previous draft generation cost $C_d$ is paid regardless.

\noindent\textbf{When does rejection help?} Quality improves only if the target step outperforms the draft step under the same PRM:
\[
\text{benefit} \;=\; \mathbb{I}\{A=0\}\,\mathbb{I}\{s_t>s_d\}.
\]
Otherwise, the expensive regeneration is \emph{wasted}. We formalize the expected wasted computation of current approaches~as:
\[
\mathcal{W}_{\text{RSD}}
\;\triangleq\;
\mathbb{E}\big[\, C_t \cdot \mathbb{I}\{A=0\}\cdot \mathbb{I}\{s_t\le s_d\}\,\big],
\]
which grows both with the rejection rate $\Pr(s_d\le\tau)$ and with the failure probability $\Pr(s_t\le s_d\mid s_d\le\tau)$. Observe that this inefficiency comes from two key factors.
\begin{enumerate}
    \item
    On some difficult steps, the PRM assigns a low score to a draft that is essentially correct, causing a deferral to the target. The target then produces a step of comparable quality, yielding no gain, but incurring the  full computational cost.
    \item
    For the hardest instances, the draft and target may exhibit the same logical gaps or reasoning errors, so regeneration is unlikely to improve quality.
\end{enumerate}

We empirically validate this phenomenon. 
Figure~\ref{fig:rsd-limitation} summarizes the wasted compute due to useless deferrals to the target model for regeneration. 
We observe that this wasted compute w.r.t. total cost increases with higher deferral rates. 
For example, at a deferral rate of 70\%, approximately 40\% total steps were regenerated with the target model without any gain in output quality. 
See Appendix~\ref{app:limitations} for further analysis.
\section{Method}
\label{sec:method}
In this section, we formalize our proposed framework: \OURS, which aims to minimize redundant computation in step level speculative decoding methods. 
Building on the inefficiencies we identified in Section~\ref{sec:limitations}, our goal is to design a routing policy that dynamically decides at each reasoning step whether invoking the target model will meaningfully improve the step quality. 
We begin in Section~\ref{sec:oracle} by defining the \OURSORACLE, which establishes the theoretical upper bound for routing efficiency by comparing the counterfactual rewards of draft and target generations. We then in Section~\ref{sec:router} introduce the \OURSROUTER, a lightweight predictive model that approximates this oracle using only the draft’s context, thereby enabling near-optimal routing decisions, without executing the expensive target model.

\subsection{\OURSORACLE: The Ideal Routing Strategy}
\label{sec:oracle}
Building on the identified limitations of absolute-score-based rejection, we formalize the \OURSORACLE. Unlike conventional approaches that accept or reject draft steps based solely on the draft’s PRM score, $s_d$, the \OURSORACLE makes routing decisions by comparing the draft and target models’ counterfactual PRM scores for the \emph{same} reasoning step. 
Specifically, given a draft step $\vz_d$ and its target-generated counterpart $\vz_t$ for context $\vx$, the oracle selects the sequence with the higher PRM score:
\begin{equation}
\vz^* = \argmax_{\vz \in \{\vz_d, \vz_t\}} h_{\theta_{\text{PRM}}}(\vx, \vz).
\end{equation}  
This strategy establishes an upper bound for routing performance: it achieves greedy-maximal output quality for any given acceptance rate by always selecting the highest-quality step available.

To make this principle concrete while controlling the acceptance rate, we introduce two simple notions. 
First, we define the \emph{advantage} of the target over the draft on a given step:
\begin{boxdefinition}[Step-level advantage]
\label{def:step_level_advantage}
Given a context $\vx$, a draft step $\vz_d$ with PRM score $s_d = h_{\theta_{\text{PRM}}}(\vx,\vz_d)$ and a target step $\vz_t$ with score $s_t = h_{\theta_{\text{PRM}}}(\vx,\vz_t)$, the \emph{advantage} of the target over the draft on this step is
\begin{equation}
\Delta = s_t - s_d.
\end{equation}
A positive $\Delta$ indicates that the target is strictly preferred to the draft under the PRM.
\end{boxdefinition}
This mirrors the standard “advantage” terminology from reinforcement learning, and a positive $\Delta$ indicates that the target is strictly preferred to the draft under the PRM.
Second, we define an \emph{escalation decision}, which encodes the routing choice of whether to accept the draft step or escalate to the target model:

\begin{boxdefinition}[Escalation decision and realized score]
\label{def:escalation_decision_and_realized_score}
An \emph{escalation decision} is a binary variable $a \in \{0,1\}$, where $a = 1$ means we \emph{escalate} to the target model (regenerate the step with the target) and $a = 0$ means we \emph{do not escalate} and accept the draft step. The corresponding realized PRM score under decision $a$ is
\begin{equation}
S(a) \;=\; (1-a)\,s_d \;+\; a\,s_t \;=\; s_d \;+\; a\,\Delta.
\end{equation}
\end{boxdefinition}

A routing \emph{policy} $\pi$ maps available information to $a\in\{0,1\}$. 
When controlling the escalation rate via a scalar threshold $\tau\in\mathbb{R}$, the oracle uses the thresholding policy
\begin{equation}
a^\star_\tau \;=\; \mathbb{I}\{\Delta>\tau\}.
\end{equation}
The threshold $\tau$ regulates the trade-off between accuracy and how often we escalate to the target, and thus directly controls the computational cost. 
Appendix~\ref{sec:proof} proves that, under a greedy per-step objective, this thresholding rule is optimal: among all policies with the same escalation rate, it maximizes the expected PRM score $S(a)$.

While the \OURSORACLE establishes the upper bound for routing performance, it remains computationally infeasible in practice. 
Evaluating $s_t$ requires executing the target model to obtain $\vz_t$, precisely the costly operation that SD aims to avoid. Consequently, our central challenge is to approximate the oracle’s routing decision \emph{without} invoking the target model itself. Formally, this reduces to estimating the expected advantage, conditioned on the observed context and draft step:
\begin{quote}
\textit{Can we estimate how much better the target would score than the draft on the current step, without actually running the target model?}
\end{quote}
A practical solution to this question would enable near-oracle routing decisions while preserving the speedups of SD. To this end, the next subsection introduces the \OURSROUTER, a lightweight predictive model trained to estimate this conditional advantage from partial context, thus approximating oracle-level routing at minimal additional cost.

\subsection{\OURSROUTER: The Practical Routing Strategy}
\label{sec:router}
We address the above challenge of the impracticality of computing the  \OURSORACLE with a lightweight \OURSROUTER that, given only the draft-side context and step: $(\vx,\vz_d)$, predicts whether escalating will improve quality. Concretely, the router outputs a score $\hat{y}=h_{\theta_{\text{router}}}(\vx,\vz_d)$ interpreted as the likelihood that the target outperforms the draft. For routing during inference, we use a threshold $\tau$ (reject iff $\hat{y}>\tau$), enabling advantage-aware decisions without executing the target during routing. The \OURSROUTER is trained offline using data labeled by the oracle (see Section~\ref{sec:oracle}) to learn its routing decisions. 

\begin{boxinsight}[Oracle-free routing objective]
\label{def:routing_objective}
Given context $\vx$ and draft step $\vz_d$, predict the oracle advantage
\[
\Delta(\vx,\vz_d) = s_t - s_d,
\]
or equivalently its sign $y = \mathbb{I}[\Delta>0]$, using only draft-side
information, without executing the target model.
\end{boxinsight}

\subsubsection*{Dataset Construction}
\label{sec:dataset_construction}
Using the step-level advantage $\Delta$ (Definition~\ref{def:step_level_advantage}) and the oracle label $y$, we construct a step-level dataset as follows. The oracle label for escalation is
\[
y = \mathbb{I}[\Delta > 0],
\]
indicating whether invoking the target yields higher quality. For each context $x$, we decode both the draft and target models from the same prefix to obtain paired steps $(z_d, z_t)$, and compute their PRM scores $s_d$ and $s_t$ with the fixed PRM $h_{\theta_{\text{PRM}}}$, as in Definition~\ref{def:step_level_advantage}. The resulting tuples
\[
(x, z_d, z_t, s_d, s_t, \Delta, y)
\]
form the supervised training data for the router. To reduce variance in the oracle signal, we optionally draw multiple target samples per context and average their PRM scores to obtain $\bar{s}_t$ and the corresponding averaged advantage $\bar{\Delta}$; the oracle label $y$ is then computed from this averaged advantage.

\subsubsection*{Data Preprocessing}
\label{sec:preprocess}
We sample $30\text{K}$ questions via stratified sampling from the NuminaMath-CoT dataset~\cite{numina_math_datasets}, to be used as a seed dataset for our fine-tuning. 
Because the majority of draft steps are acceptable, the oracle labels $y$ exhibit strong class imbalance ($y = 0$ dominates $y = 1$). 
For example, with the Llama family configuration, the label distribution was $62.27\%$ for $y = 0$ and $37.73\%$ for $y = 1$. For better performance, we add annotations to each history step indicating the invoked models. We apply random downsampling of the majority class to balance the training set and prevent bias toward over-accepting draft steps.
We also normalize sequence lengths and enforce a consistent step separator token (\texttt{\textbackslash n\textbackslash n}) to align PRM scoring and router inputs.

\subsubsection*{Training \OURSROUTER}
\label{sec:training}
The router is initialized from a compact PRM checkpoint (reasoning-aware, step-level pretraining) to provide an inductive bias for evaluating intermediate reasoning quality.
We finetune the model with the AdamW optimizer~\cite{loshchilov2019decoupledweightdecayregularization}, using linear warmup and decay. The classification head is applied on the final token embedding, and the model is optimized via standard cross-entropy loss. The training hyperparameters are presented in the Table~\ref{tab:hyperparams} in Appendix~\ref{app:hparams}.

\subsubsection*{Routing Quality Evaluation.}
\label{sec:router_quality_evaluation}

Evaluating the quality of the router during training is non-trivial. Unlike standard classifiers, we cannot afford to perform full end-to-end evaluation of the entire SD pipeline across a range of thresholds~$\tau$, as each such evaluation would require executing both the draft and target models, which would be prohibitively expensive. Consequently, we need a reliable \emph{proxy metric} that reflects how well the router aligns with the oracle’s routing decisions, without repeatedly running the full algorithm. Traditional metrics such as accuracy, precision, or recall at a fixed threshold can serve as a simple evaluation metric to judge the quality of the router. Since $y=0$ and $y=1$ leads to generations from different models, we observe that:
\begin{enumerate}
    \item Accuracy of class $y = 0$ (accept draft) ensures high acceptance rates and fast generation.
    \item Accuracy of class $y = 1$ (reject draft) ensures that challenging steps—where the draft model is likely to fail—are delegated to the target model, preserving output quality and overall accuracy.
\end{enumerate}
However, these traditional metrics are heavily dependent on the chosen $\tau$, and they do not capture how the router’s predictions correlate with the true advantage signal that drives the oracle. Thus, they provide limited insight: a router may achieve high accuracy at a specific threshold, and yet still mis-rank steps by their expected improvement, leading to suboptimal routing behavior when $\tau$ changes. 
To solve this, and to use a fixed, global metric to evaluate overall quality, we adopt \textbf{Spearman’s rank correlation}~\cite{spearmanpaper} between the predicted routing probabilities~$\hat{y}$ and the oracle advantage scores~$\Delta$:
\begin{equation}
    \rho = \mathrm{corr}_S(\hat{y}, \Delta).
    \label{eq:spearman}
\end{equation}
This rank-based correlation serves as a threshold-invariant proxy for router quality, measuring how well the predicted ordering of steps matches the oracle’s ground-truth ordering of advantages. A high~$\rho$ indicates that the router consistently assigns higher probabilities to steps where escalation truly helps, implying robust performance across a wide range of thresholds, even without explicitly re-running the full decoding algorithm.

\subsubsection*{\OURS Speculative Generation.}
\label{sec:algorithm}
Figure~\ref{fig:arbitrage_intro} shows the overall \OURS speculative generation workflow during inference with \OURSROUTER. The process is formulated as a step-level speculation: (i) the draft model proposes a candidate step; and (ii) the router decides whether to accept it or escalate to the target model for regeneration. Algorithm~\ref{alg:arbitrage} in Appendix~\ref{app:algorithm} summarizes the complete decoding procedure. The router introduces a small overhead of one forward pass per step, while offering fine-grained control over the compute–quality trade-off through the threshold~$\tau$.
\section{Evaluation}
In this section, we evaluate the performance of our framework, \OURS, across different model configurations and reasoning benchmarks. We first analyze accuracy under varying \emph{acceptance rates} (as defined in Section~\ref{sec:overview_rsd}), which control the balance between using the draft and target models. We then present end-to-end latency results, showing that \OURS achieves substantial speedups by performing fewer, more effective escalations to the target model.

\subsection{Setup}
All experiments were conducted on NVIDIA A6000 GPUs using SGLang as the inference backend, each model occupying a dedicated GPU. For speedup measurements, we fix the batch size to 1. We evaluate \OURS on two representative model families, LLaMA3~\cite{grattafiori2024llama} and Qwen2.5-Math~\cite{yang2024qwen2}, under two practical draft–target regimes:
\begin{enumerate}
    \item A large target with a smaller draft from the same family.
    \item A large target with a weight-quantized version of itself as the draft.
\end{enumerate}
We use llama.cpp for quantizing Qwen models and GPTQ~\cite{frantar2023gptqaccurateposttrainingquantization} for quantizing Llama models. All the models chosen are instruction finetuned versions from their respective family. For the PRM, unless stated otherwise, we use Skywork-o1-Open-PRM (1.5B)~\cite{skywork_o1_open_series_2025}, and we fine-tune 
Qwen2.5-Math-1.5B-Instruct-PRM-0.2~\cite{uesato2022solving} to obtain an \OURSROUTER tailored to each target–draft pair. For brevity, from here on we use notations like \OURS (4bit-7B/7B) to denote \OURS based inference using a 4bit weight quantized 7B draft model, a bf16 7B target model and 1.5B router.

\subsection{Results}
\label{sec:algo_results}
\begin{figure*}[t]
  \centering

  \begin{minipage}[t]{0.33\textwidth}
    \centering
    \includegraphics[width=\linewidth]{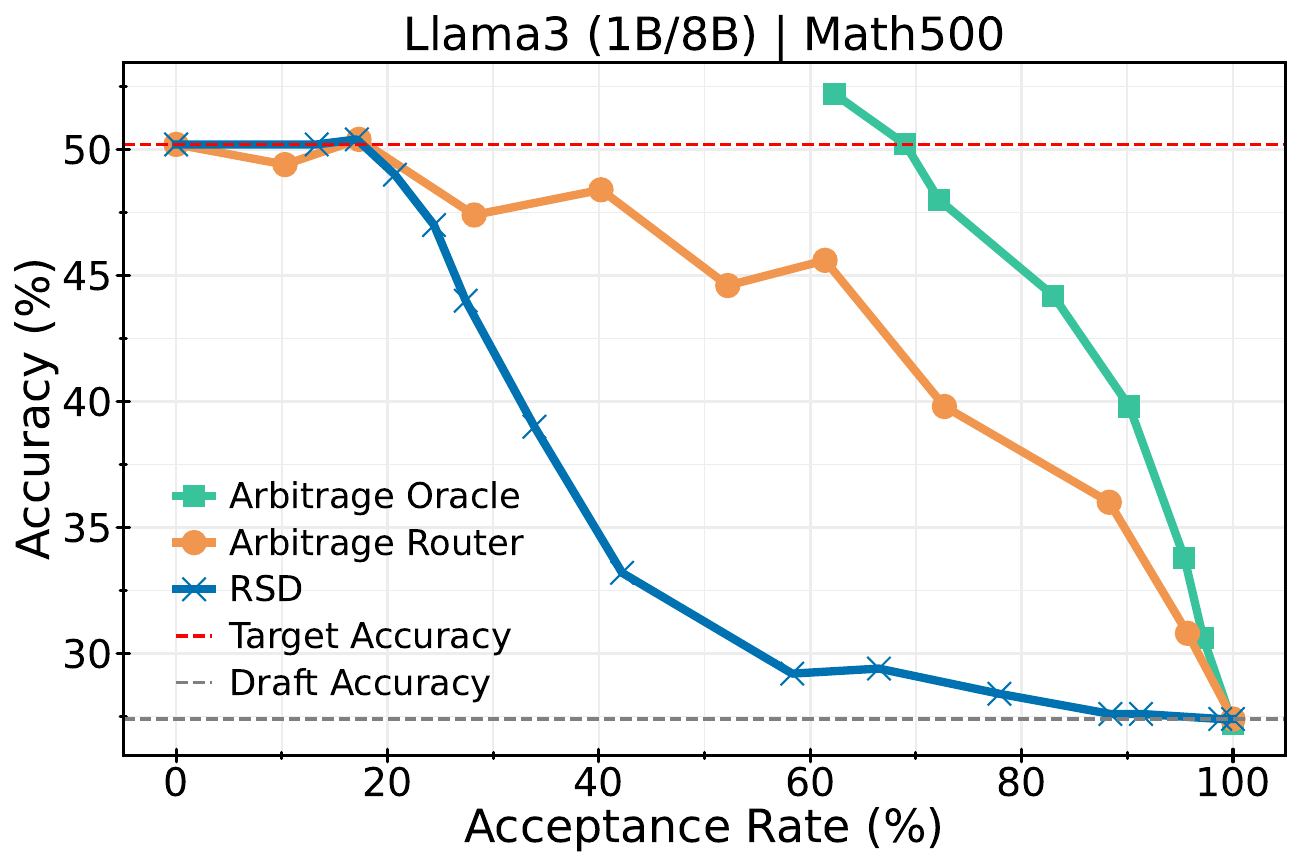}
    \par\vspace{0.25em}
    \includegraphics[width=\linewidth]{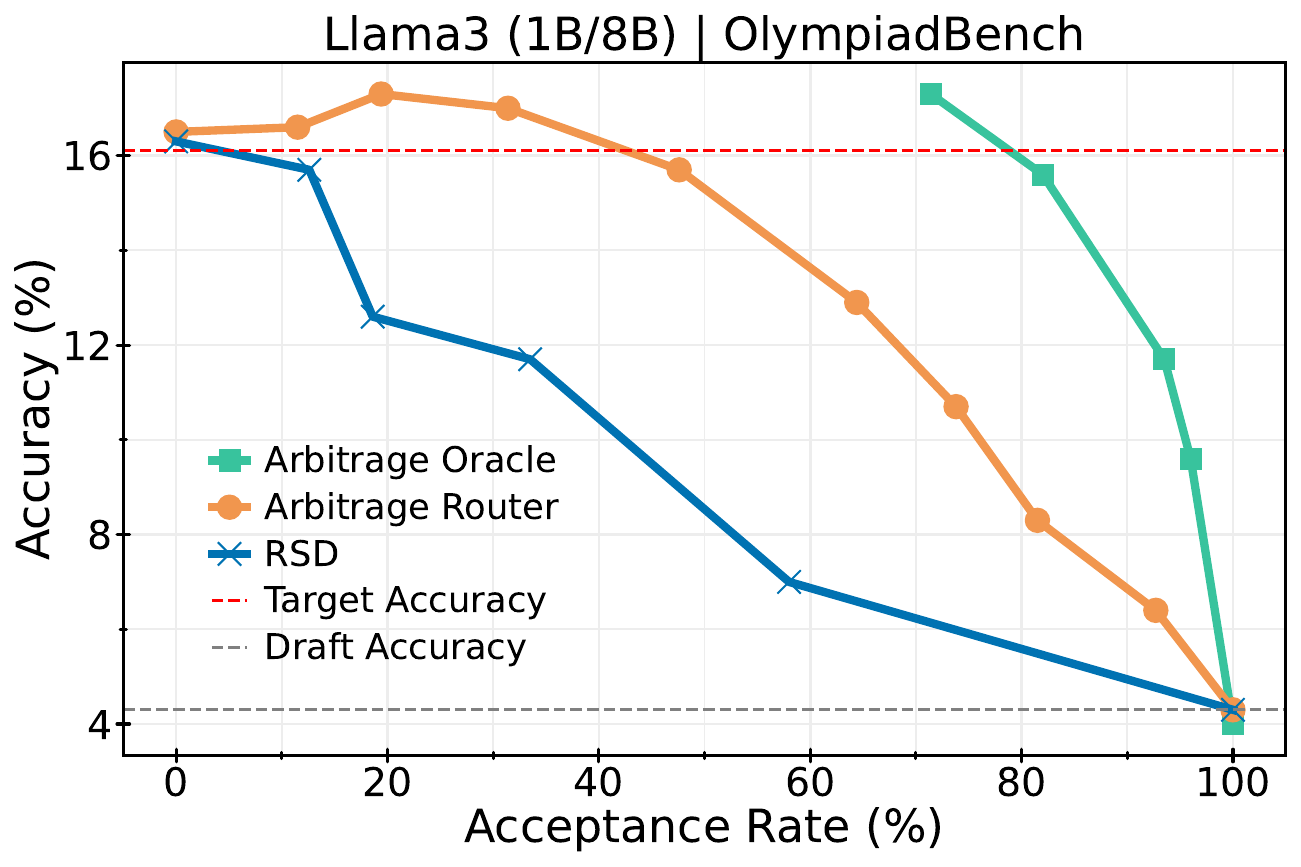}
    \par\vspace{-1pt}\small(a) \
  \end{minipage}\hfill
  %
  \begin{minipage}[t]{0.33\textwidth}
    \centering
    \includegraphics[width=\linewidth]{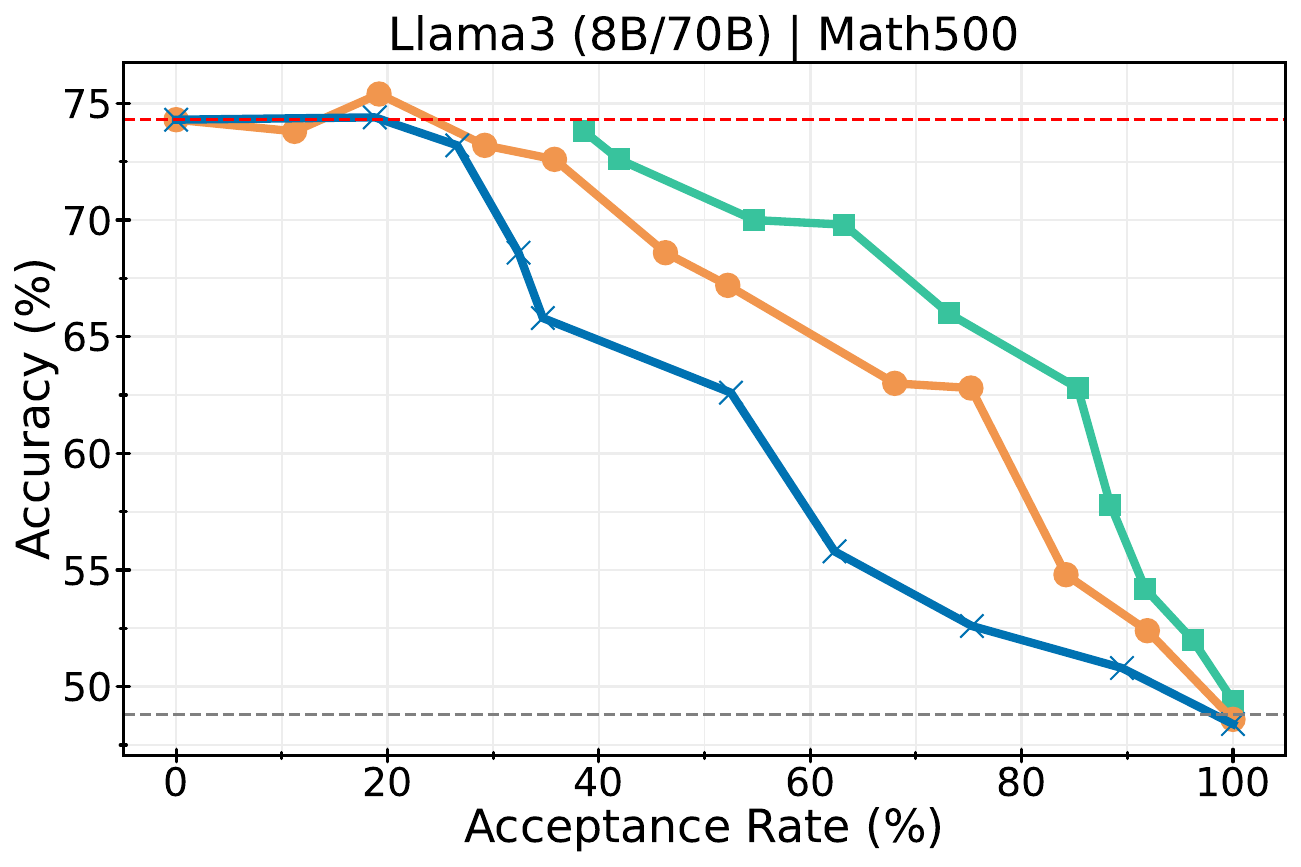}
    \par\vspace{0.25em}
    \includegraphics[width=\linewidth]{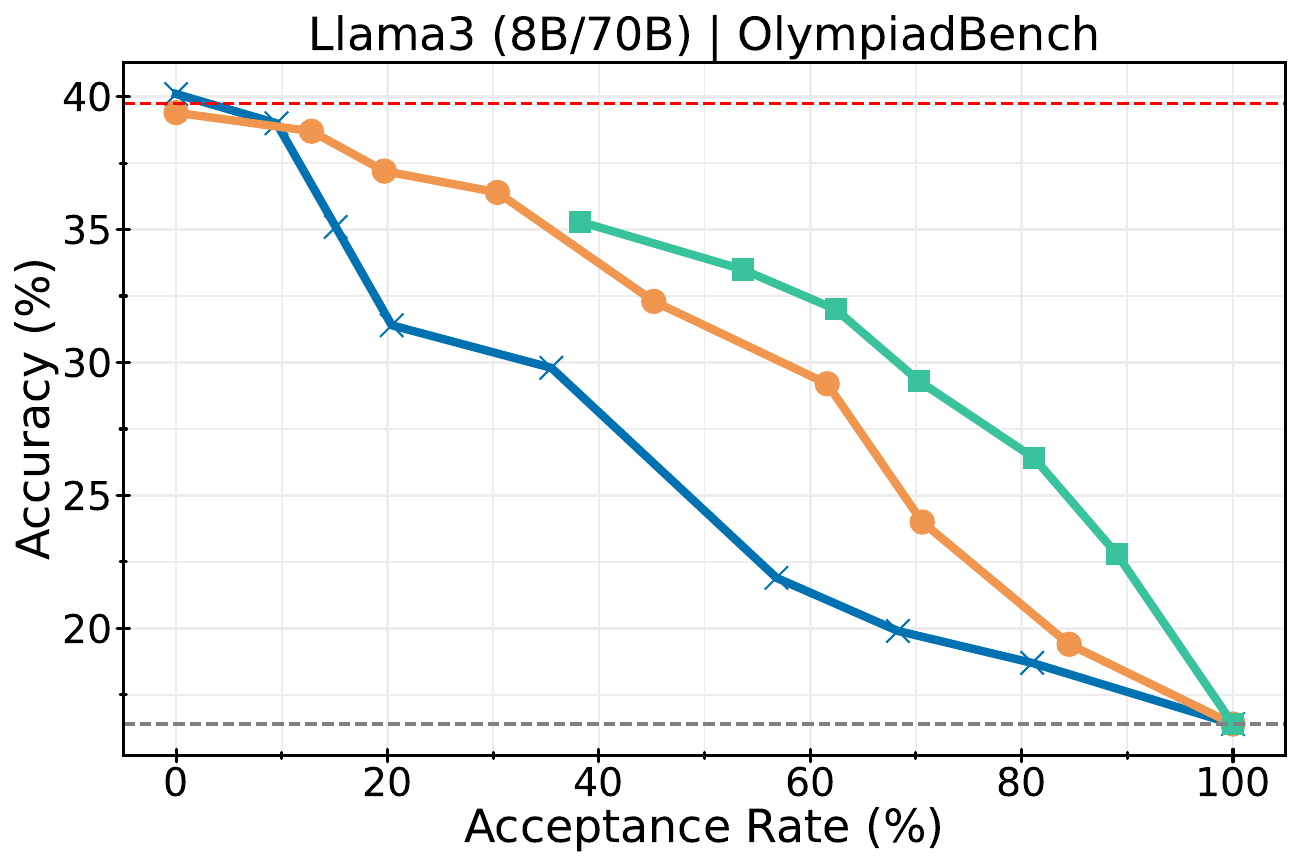}
    \par\vspace{-1pt}\small(b) \
  \end{minipage}\hfill
  %
  \begin{minipage}[t]{0.33\textwidth}
    \centering
    \includegraphics[width=\linewidth]{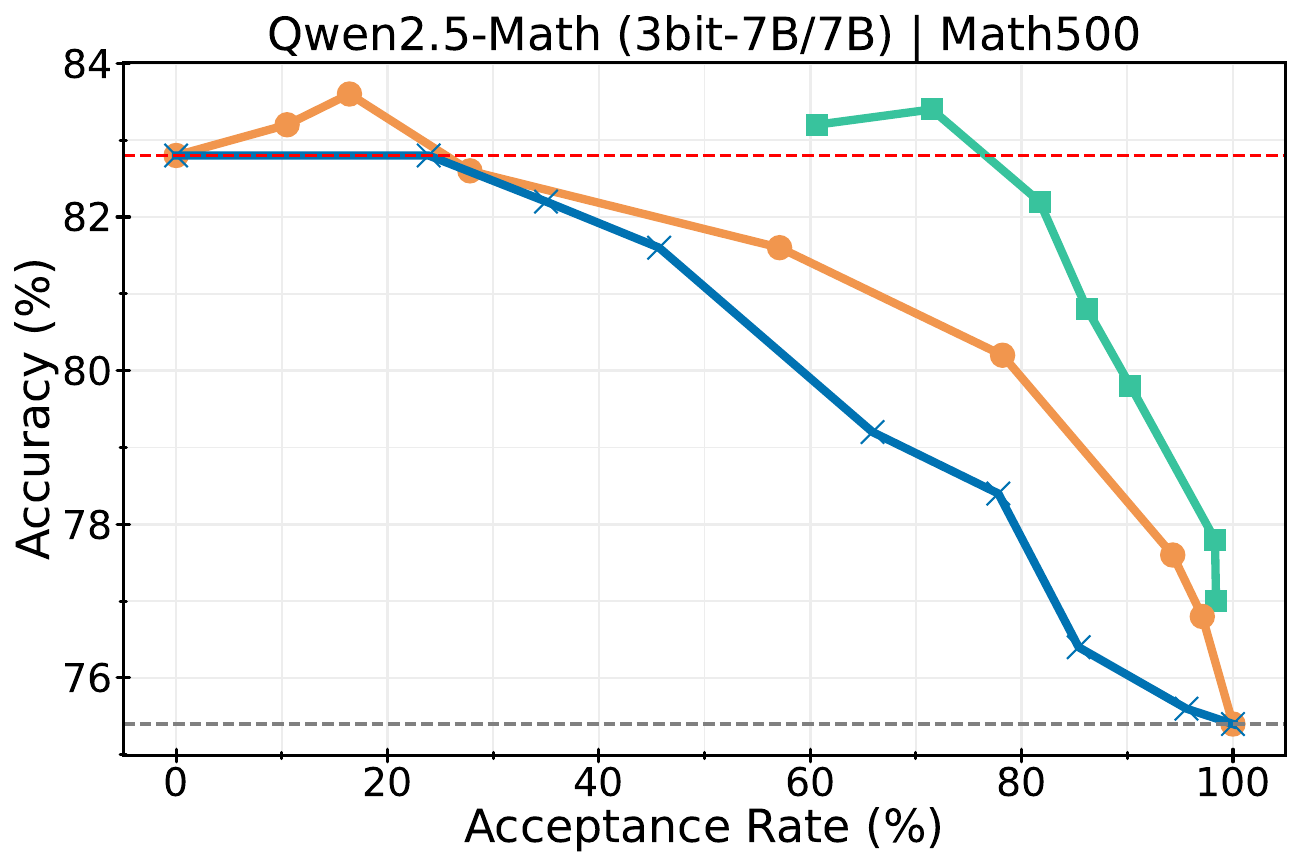}
    \par\vspace{0.25em}
    \includegraphics[width=\linewidth]{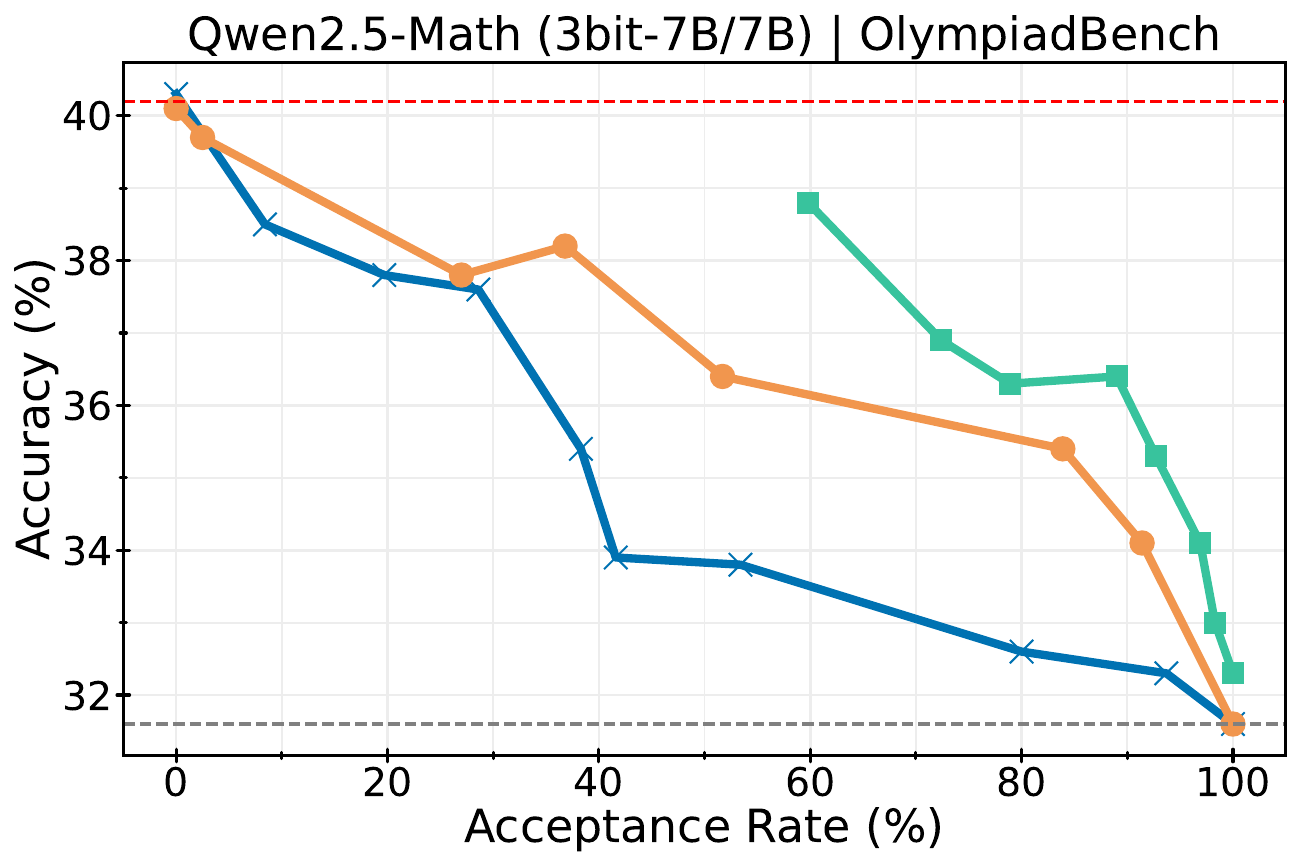}
    \par\vspace{-1pt}\small(c) \
  \end{minipage}

  \caption{\textbf{\OURS improves the compute–quality trade-off.}
  Accuracy vs.\ acceptance rate for \OURSORACLE, \OURSROUTER, and RSD across two benchmarks (MATH500 and OlympiadBench) and three model configurations.
  The top row shows results on MATH500 and the bottom row on OlympiadBench.
  Columns (a), (b), and (c) correspond to LLaMA3 (1B/8B), LLaMA3 (8B/70B), and Qwen2.5-Math (3bit-7B/7B), respectively.
  In all cases, \OURS consistently yields higher accuracy at comparable acceptance rates, demonstrating superior compute–quality efficiency.
  Additional results are provided in Appendix~\ref{app:additional_results}.}
  \label{fig:acc_vs_accept}
\end{figure*}

Figure~\ref{fig:acc_vs_accept} plots accuracy (y-axis) as a function of acceptance rate (x-axis) on MATH500~\cite{hendrycks2021measuring} and OlympiadBench~\cite{he2024olympiadbench} for three routing configurations: LLaMA3 (1B/8B), LLaMA3 (8B/70B), and Qwen2.5-Math (3bit-7B/7B). The top row shows MATH500 and the bottom row OlympiadBench; in each panel we also include the draft-only and target-only accuracies as horizontal reference lines, along with \OURSORACLE as an upper bound. Across all six model–dataset combinations, the \OURSROUTER curve lies strictly above RSD at nearly all acceptance rates, indicating that advantage-aware routing extracts more accuracy per unit of target usage. In these regimes, the \OURSROUTER curve tracks the oracle upper bound closely, while RSD remains noticeably below it. These empirical trends indicate that \OURSROUTER closely tracks the oracle across model scales, datasets, and acceptance regimes.

Looking more closely at individual configurations, we observe that the gains from \OURSROUTER are most pronounced when the draft model is substantially weaker than the target. For example, in the LLaMA3 (1B/8B) setting on both MATH500 and OlympiadBench, RSD remains much closer to the draft-only baseline over a wide range of acceptance rates, whereas \OURSROUTER rapidly climbs toward the target-only line. This suggests that when the potential advantage of escalation is large but unevenly distributed across queries, advantage-aware routing is particularly effective at identifying the subset of examples where the target model is truly needed. In contrast, the LLaMA3 (8B/70B) and Qwen2.5-Math (3bit-7B/7B) settings exhibit smaller draft–target gaps, yet \OURSROUTER still secures a consistent margin over RSD, demonstrating that our method remains beneficial even when the draft is already relatively strong. Near the high-acceptance regime, \OURSROUTER remains closer to the oracle than RSD, implying that it continues to allocate target computation to challenging instances rather than escalating indiscriminately. Finally, the consistent ordering of the methods across both MATH500 and OlympiadBench suggests that the advantages of relative-advantage routing are robust to dataset difficulty and distributional shift, rather than being tied to a particular benchmark.

\subsection{Speedup Analysis}
\label{sec:speedup}
\begin{figure*}[t]
  \centering
  \begin{minipage}[t]{0.49\textwidth}
    \centering
    \includegraphics[width=1.\linewidth]{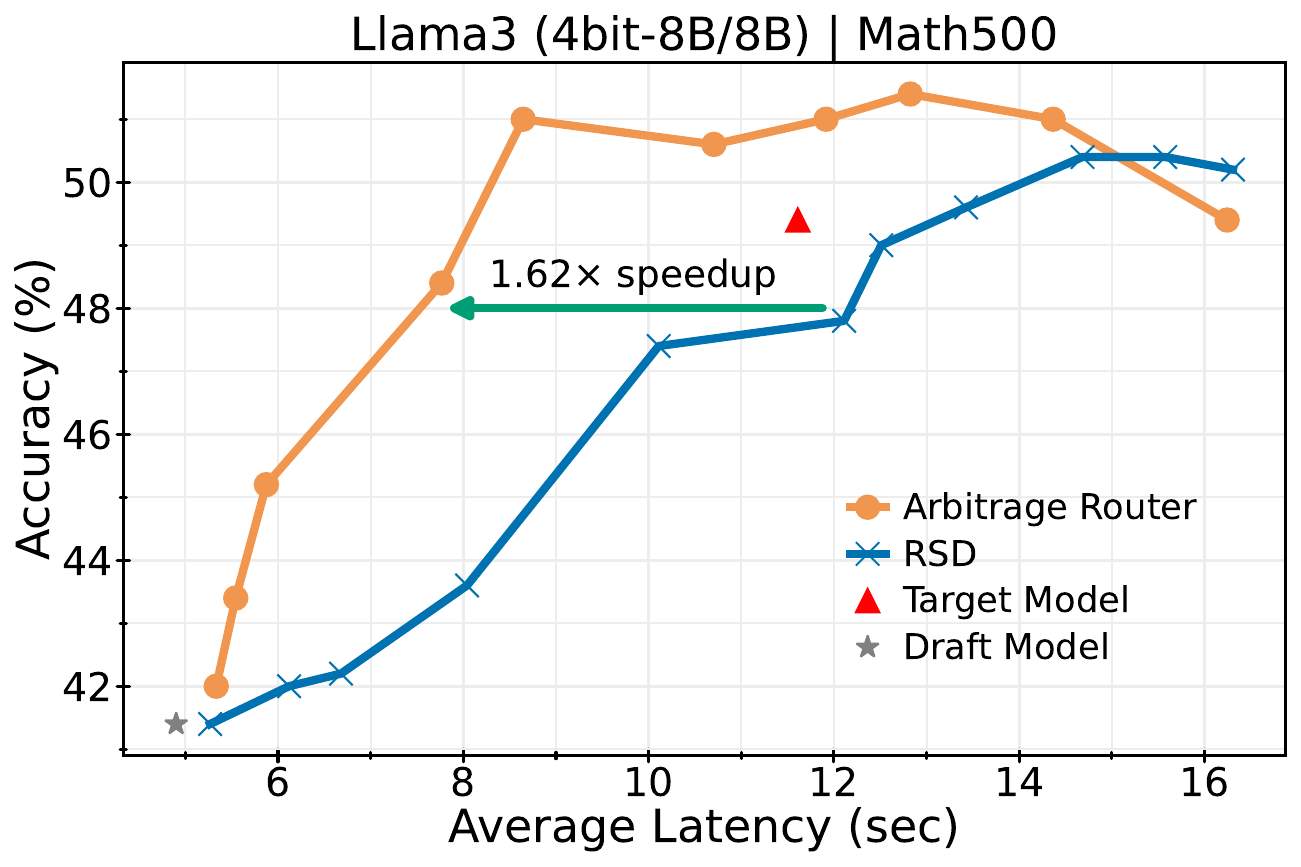}
    \par\small(a)
  \end{minipage}%
  \begin{minipage}[t]{0.49\textwidth}
    \centering
    \includegraphics[width=1.\linewidth]{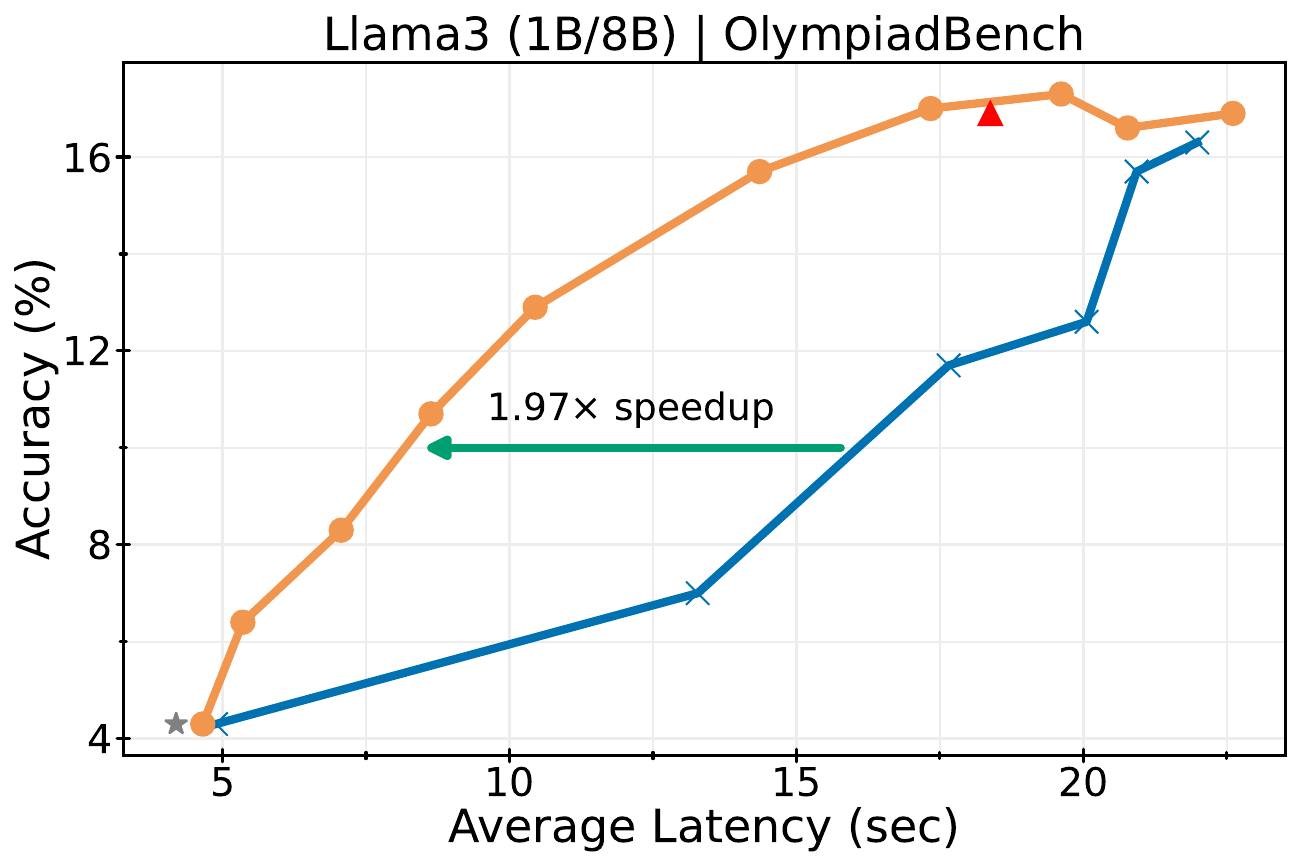}
    \par\small(b)
  \end{minipage}

\caption{\textbf{\OURS improves the compute–quality trade-off.}
Accuracy–time curves for \OURSROUTER and RSD on two LLaMA3 routing configurations.
Subplot (a) reports results for a quantized-draft / full-precision-target setting (Q4-bit-8B/8B/1.5B) on MATH500, and subplot (b) for a small-draft / large-target setting (1B/8B/1.5B) on OlympiadBench. Across both configurations, \OURSROUTER consistently achieves higher accuracy at a given wall-clock time than RSD, yielding a better Pareto frontier.
Each marker corresponds to a different threshold operating point; moving right indicates increased target-model invocations (and thus higher latency).}

\label{fig:acc_vs_time}
\end{figure*}

We quantify the compute savings of \OURS by measuring the average end-to-end wall-clock time per problem as we sweep the routing threshold, yielding the accuracy--latency Pareto curves in Figure~\ref{fig:acc_vs_time}. Each point on a curve corresponds to a different operating threshold, and therefore a different fraction of queries that are escalated from the draft to the target model. Moving to the right along a curve increases the escalation rate and thus the overall latency, since more problems are solved (partially or fully) by the slower target model. Across all model and dataset configurations, \OURS strictly dominates RSD: for any fixed latency budget, \OURS attains higher accuracy, and for any desired accuracy, it achieves a lower latency. On MATH500 under the quantized-draft regime (\textsc{Q4-8B}/8B), \OURS delivers up to \textbf{1.62$\times$} lower latency at comparable accuracy relative to RSD. On OlympiadBench with the small-draft regime (1B/8B), it reaches up to \textbf{1.97$\times$} speedup at matched accuracy. Qualitatively, we observe that the largest gains occur in the mid-latency regime. Here, RSD tends to over-escalate on examples for which the target offers marginal benefit, incurring substantial latency without commensurate accuracy gains. In contrast, \OURS concentrates target compute on instances where the estimated advantage over the draft is highest, and relies on the draft model elsewhere. This selective escalation yields curves that rise more steeply with latency, reflecting a more favorable compute--quality trade-off.

\subsection{Case Studies}
\begin{figure*}[!hbt]
    \centering
    \includegraphics[width=\linewidth]{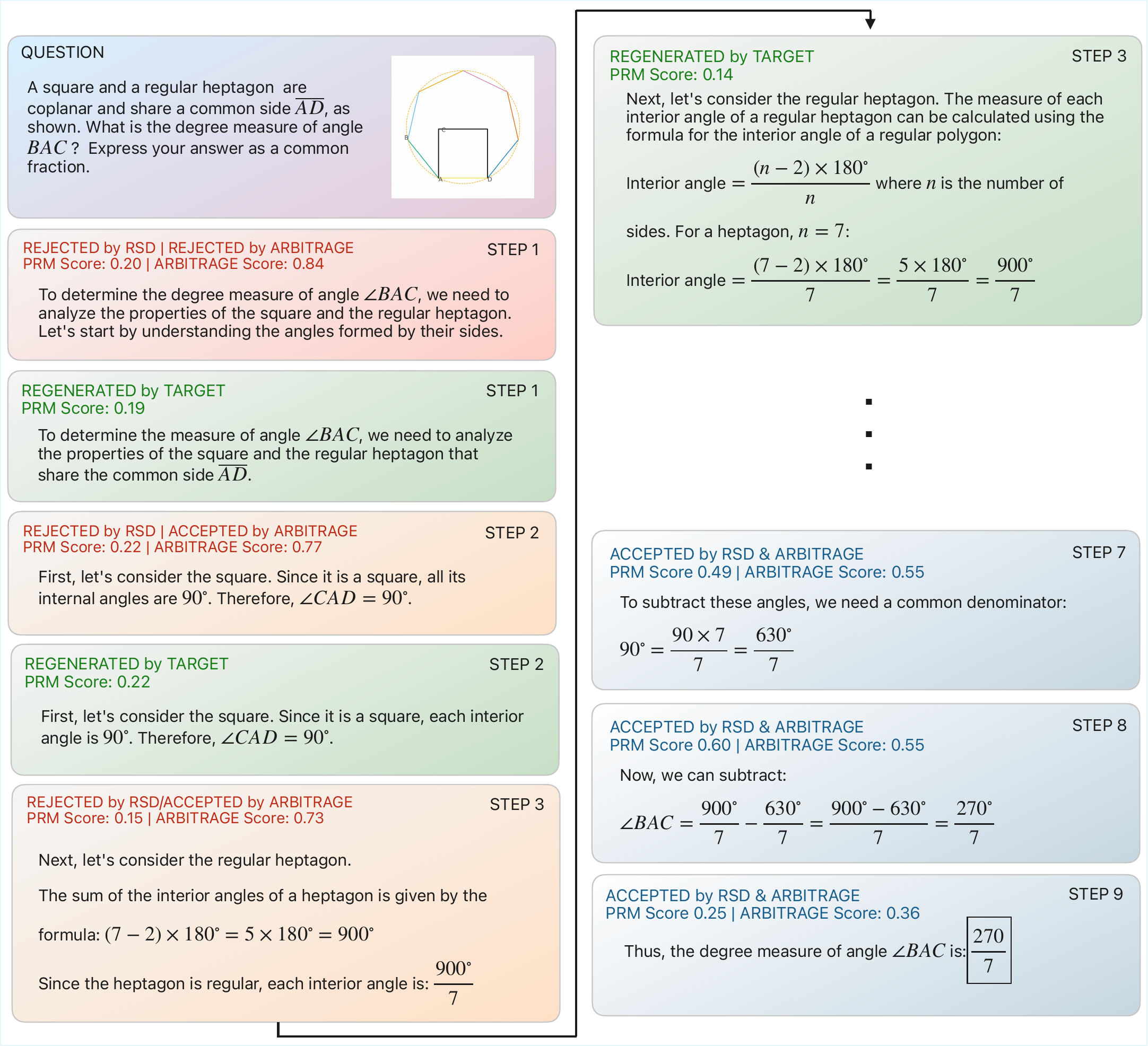}
    \caption{Qualitative example from MATH500. We first obtain a full solution using RSD and then rescore every step with \OURSROUTER under the same acceptance budget. Steps kept by RSD are highlighted in blue, those it rejects are in red or orange, and steps regenerated by the target model are shown in green. Notably, steps 2 and 3 (orange) are accepted by \OURSROUTER but rejected by RSD: their regenerated versions are essentially identical in content and still lead to the correct final answer of $270/7$. The example shows how RSD’s global reward cutoff can discard valid intermediate reasoning, whereas \OURS’s advantage-based, relative routing preserves such steps and eliminates redundant regeneration.}
    \label{fig:case-study}
\end{figure*}

To further illustrate the practical advantages of \OURS over the baseline, we conduct a qualitative case study on the MATH500 benchmark using Qwen2.5-Math 3-bit-7B/7B as the draft and target models. Starting from an RSD-generated trajectory, we re-score each step with \OURSROUTER while enforcing the same 80\% overall acceptance rate for both approaches. As visualized in Figure~\ref{fig:case-study}, RSD rejects several intermediate steps (shown in red and orange) because their individual PRM scores fall below a fixed global cutoff. Many of these steps are in fact correct and encapsulate nontrivial reasoning. For example, determining the interior angles of the regular heptagon and establishing the necessary geometric relationships. When these steps are regenerated by the target model (green), the resulting text is semantically almost identical and still leads to the correct final answer. In contrast, \OURS keeps such steps considering the predicted advantage of the target over the draft: routing decisions are based on the relative benefit of escalation rather than an absolute reward threshold. This advantage-aware view avoids redundant regenerations, preserves coherent reasoning trajectories, and allocates target compute only where a substantially better step is likely. 
\section{Ablations}

We conduct ablation studies to understand how different design choices affect the performance of \OURSROUTER. We focus on two main dimensions: (1) model architecture, and (2) data-related design choices. On the architecture side, we compare multi-class classification against regression-based variants. On the data side, we evaluate the impact of stepwise annotation and class-balanced downsampling. For each setting, we report the Spearman correlation between the router’s predictions and the oracle advantage scores $\Delta$ on the evaluation set, as well as accuracy for both classes. In every ablation, we modify only the factor under study and keep all other components fixed. Some ablations were run mid-development, so their final checkpoints may not be strictly comparable to the main model. Unless otherwise noted, we finetune each variant for 3 epochs.

\subsection{Model Architecture}
\subsubsection*{Label Granularity}
Prior work suggests that supervision with fine-grained labels can improve performance on coarser downstream decisions~\cite{inproceedingsBoosting,articlehierarchical}. Motivated by this, we explore multi-class, ordinal variants of the routing objective. Specifically, we partition the training and evaluation data into 2, 4, or 10 classes based on quantiles of the true advantage score~$\Delta$. Table~\ref{tab:ablation-model-arch} summarizes the results. The 2-class setting achieves the highest overall Spearman correlation and maintains balanced accuracy across the two labels (accept: label 0 vs.\ escalate: label 1), indicating that it provides a good trade-off between acceptance rate and end-task quality. When we increase the number of classes to 4 or 10, overall correlation drops or the accuracy becomes skewed between label~0 and label~1, leading to worse routing behavior. Empirically, the advantage scores are densely concentrated around zero with a long positive tail. In this regime, aggressive discretization into many bins introduces ambiguous decision boundaries and effectively blurs neighboring classes. Given these observations, we adopt the 2-class formulation as our default.

\subsubsection*{Classification vs. Regression}  
Because the advantage score $\Delta$ is continuous, a natural alternative to classification is to train a regression model. For ordered labels, ordinal regression is often used to exploit the underlying ranking structure~\cite{wang2025surveyordinalregressionapplications}. We therefore compare a standard 4-way classification model with an ordinal-regression model defined over the same 4 ordinal levels. As shown in Table~\ref{tab:ablation-model-arch}, the ordinal regression variant underperforms the classification model across all metrics, despite using richer ranking information during training. In contrast, the classification architecture yields stronger empirical performance and is easier to calibrate and deploy. Based on this combination of accuracy and practical simplicity, we use the standard classification formulation in the final \OURSROUTER{}.
\begin{table}[h]
\caption{Ablation results for different model architectures, varying label granularity and including an ordinal regression variant. The standard 2-class formulation (accept vs.\ escalate) attains the best trade-off between Spearman correlation and per-label accuracy, indicating that binary routing is the most robust choice.}
\centering
\begin{tabular}{lccc}
\toprule
Variant  & Spearman ($\rho$) & Acc 0 & Acc 1 \\
\midrule
\rowcolor{blue!8}
2-Classification               & 0.1508 & 70.04 & 72.96 \\
4-Classification               & 0.1417 & 64.24 & 80.53 \\
4-Class Ordinal     & 0.1355 & 62.00 & 79.77 \\
10-Classification              & 0.1337 & 71.29 & 71.77 \\
\bottomrule
\end{tabular}
\label{tab:ablation-model-arch}
\end{table}

\subsection{Data-related Design Choices}
\subsubsection*{Step Annotation}
Another design choice is whether to incorporate explicit signals about the reasoning trajectory at each step. A baseline approach treats each step independently, predicting the routing decision from the local step context alone. Our hypothesis is that previous routing decisions provide valuable context: if the target model has been invoked repeatedly in earlier steps, the current step is more likely to belong to a difficult portion of the problem and thus may benefit from escalation.
To test this hypothesis, we introduce a simple annotation scheme that encodes the history of model choices. For each step $i$, we prepend a tag such as \texttt{Model 0} or \texttt{Model 1} to the step input, indicating which model was used at that step. This produces contextualized inputs whose annotations summarize both the overall difficulty of the question and the local difficulty pattern along the chain of thought. Table~\ref{tab:ablation-annotation} shows that including these annotations improves label-1 accuracy (correctly escalating when the target model is beneficial) and increases Spearman correlation, which correlates with end-to-end task accuracy. Because these gains are consistent and the mechanism is simple to implement, we enable step annotations by default in \OURSROUTER{}.
\begin{table}[h]
\caption{Effect of incorporating historical routing context. The annotated setting, which conditions each step on prior routing decisions, improves both Spearman correlation and label-1 accuracy, indicating that leveraging routing history enhances end-to-end routing performance.}

\centering
\begin{tabular}{lccc}
\toprule
Variant  & Spearman ($\rho$) & Acc 0 & Acc 1 \\
\midrule
\rowcolor{blue!8}
Annotated               & 0.1508 & 70.04 & 72.96 \\
Not Annotated           & 0.1305 & 73.06 & 69.58 \\
\bottomrule
\end{tabular}
\label{tab:ablation-annotation}
\end{table}

\subsubsection*{Data Downsampling}  
As discussed in Section~\ref{sec:preprocess}, the training data for \OURSROUTER{} is highly imbalanced: negative examples (steps where the draft is accepted, $y=0$) occur roughly twice as often as positive examples (steps where escalation is preferred, $y=1$). This reflects the fact that most steps are sufficiently handled by the draft model, with only a minority requiring escalation.
We evaluate two simple preprocessing strategies to address this imbalance:
(1) \emph{No downsampling}, where we train directly on the raw, imbalanced dataset; and (2) \emph{Balanced downsampling}, where we randomly subsample accepted steps ($y = 0$) so that the counts of $y = 0$ and $y = 1$ are approximately equal.
Results are reported in Table~\ref{tab:ablation-downsampling}. Without downsampling, the router becomes overconfident in predicting $\hat{y} = 0$ and assigns very low probability to $\hat{y} = 1$, leading to under-escalation and suboptimal performance. With balanced downsampling, even though we train on fewer total examples, the model exhibits better calibration and produces more balanced predictions over the two classes. In our final training pipeline, we therefore apply class-balanced downsampling to mitigate label skew and improve routing quality.
\begin{table}[h]
\caption{Effect of class-balanced downsampling. Balancing the training data reduces bias toward the majority (accept) class, improving Spearman correlation and label-1 accuracy and leading to more stable routing behavior.}

\centering
\begin{tabular}{lccc}
\toprule
Variant  & Spearman ($\rho$) & Acc 0 & Acc 1 \\
\midrule
\rowcolor{blue!8}
Balanced DS          & 0.1673 & 63.28 & 64.80 \\
No DS    & 0.1587 & 77.34 & 49.62 \\
\bottomrule
\end{tabular}
\label{tab:ablation-downsampling}
\end{table}

\section{Conclusions}

We study step-level speculative generation as a paradigm for accelerating LLM inference on reasoning-intensive tasks, and we identify a key source of inefficiency in existing approaches. In current methods, when a draft step is rejected and recomputed by the target model, the regenerated step often provides limited or no improvement, while still incurring the full latency cost of the target model. To address this, we introduce \OURS, a speculative generation framework that explicitly reasons about the relative quality of draft and target steps. \OURS has two components:
(1) the \OURSORACLE, an idealized routing policy that selects the better of the two steps using ground-truth advantage signals; and
(2) the \OURSROUTER, a lightweight, trainable predictor that approximates this oracle using only draft-generated context, enabling practical deployment without querying the target model during routing. Across multiple mathematical reasoning benchmarks, \OURS consistently improves over prior baselines in both inference speed and answer quality. For example, on MATH500 with a quantized-draft regime (Q4-8B/8B), \OURS achieves up to $1.62\times$ lower latency at comparable accuracy, and on OlympiadBench with a small-draft regime (1B/8B), it reaches up to $1.97\times$ speedup at matched accuracy. More broadly, \OURS reduces end-to-end latency by up to roughly $2\times$ over step-level speculative decoding baselines at fixed accuracy targets. By replacing absolute, draft-only acceptance rules with an expected-advantage estimate, \OURS achieves a stronger efficiency–accuracy trade-off and establishes a new baseline for step-level speculative decoding.

\section*{Acknowledgements}
We acknowledge gracious support from the Furiosa AI, Intel, Apple, NVIDIA, Macronix, and Mozilla team.
Furthermore, we appreciate support from
Google Cloud, the Google TRC team and Prof. David Patterson.
Prof. Keutzer's lab is sponsored by the Intel corporation, UC Berkeley oneAPI Center of Excellence, Intel VLAB team, as well as funding through BDD and BAIR.
We also acknowledge support by the Director, Office of Science, Office of Advanced Scientific Computing Research, of the U.S. Department of Energy under Contract No. DE-AC02-05CH11231.
MWM would also like to acknowledge DARPA, DOE, NSF, and ONR.
DOE SciGPT grant. Our conclusions do not necessarily reflect the position or the policy of our sponsors, and no official endorsement should be~inferred.

\bibliographystyle{plain}
\bibliography{references}
\appendix
\counterwithin{figure}{section}
\counterwithin{table}{section}
\appendix
\clearpage
\onecolumn

\section{Analyzing Compute Wastage in RSD}
\label{app:limitations}
\begin{figure*}[ht]
  \centering
  \begin{subfigure}[t]{0.7\textwidth}
    \centering
    \includegraphics[width=\linewidth]{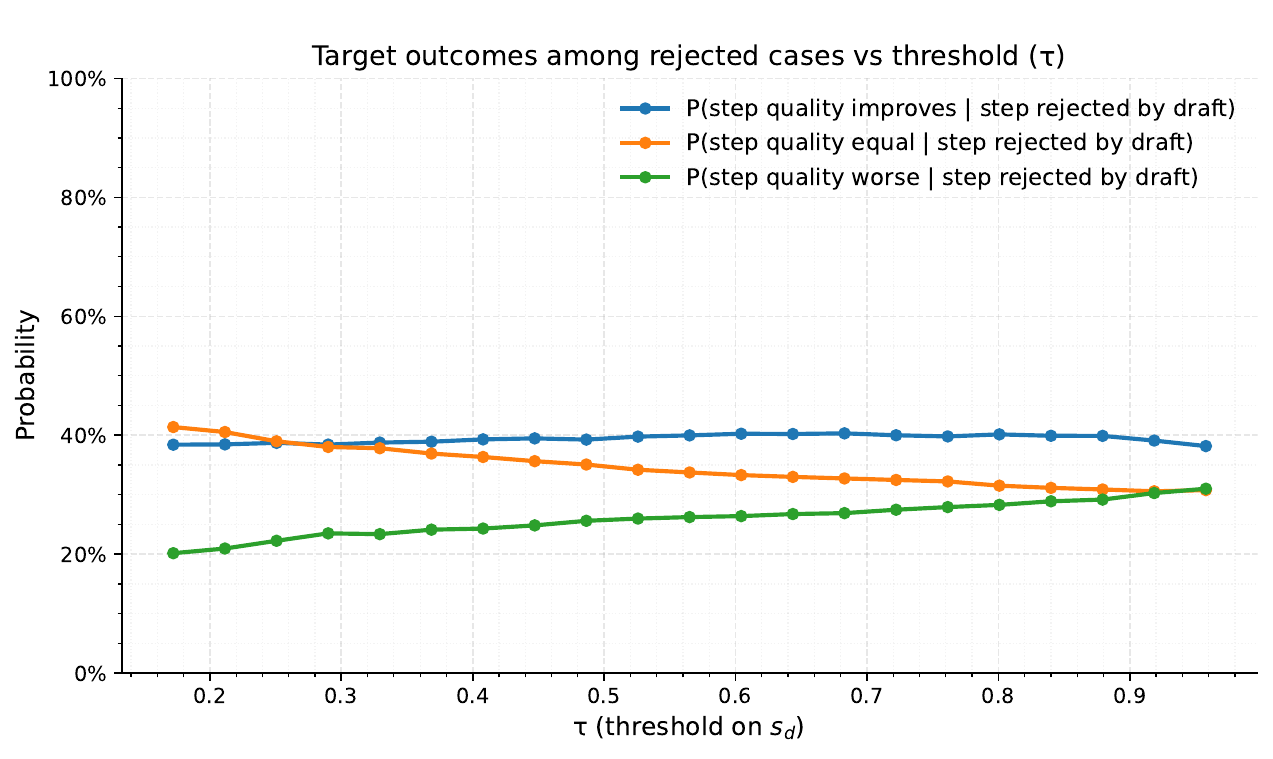}
    \caption{Outcomes among rejected cases vs.\ $\tau$}
    \label{fig:rsd_outcomes_vs_tau}
  \end{subfigure}
  \begin{subfigure}[t]{0.7\textwidth}
    \centering
    \includegraphics[width=\linewidth]{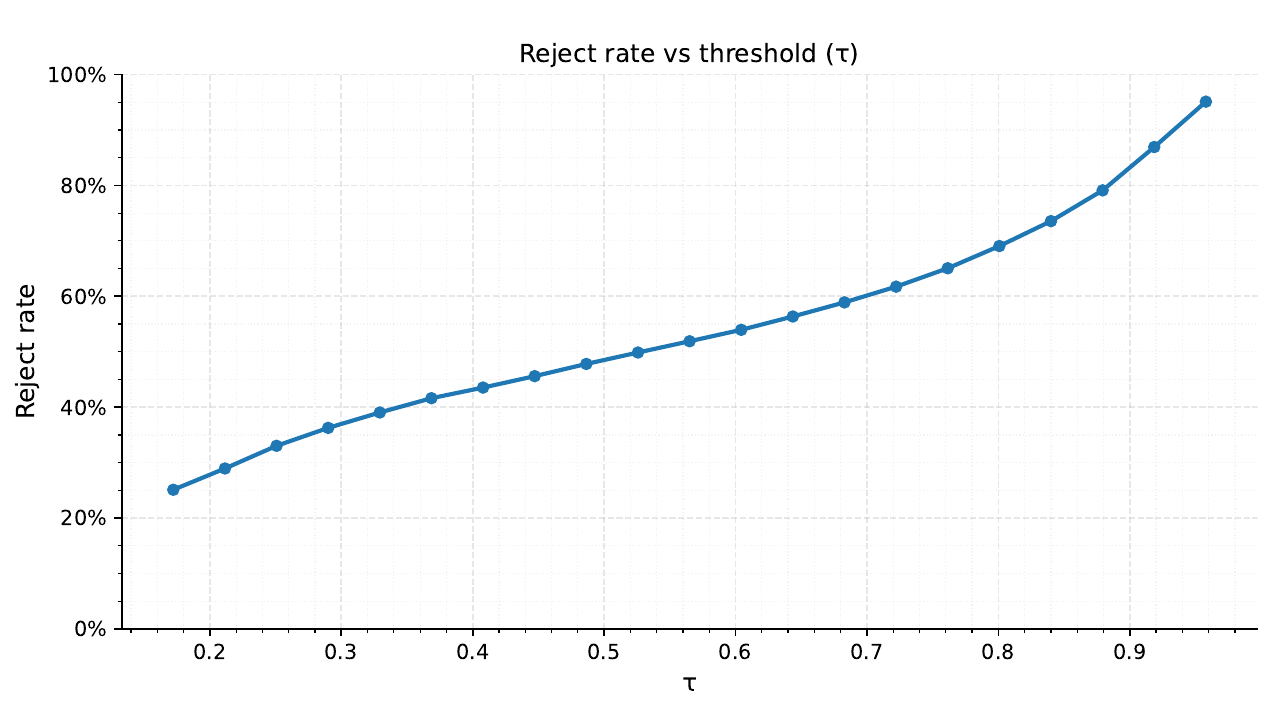}
    \caption{Reject rate vs.\ $\tau$}
    \label{fig:rsd_reject_rate_vs_tau}
  \end{subfigure}
  \begin{subfigure}[t]{0.7\textwidth}
    \centering
    \includegraphics[width=\linewidth]{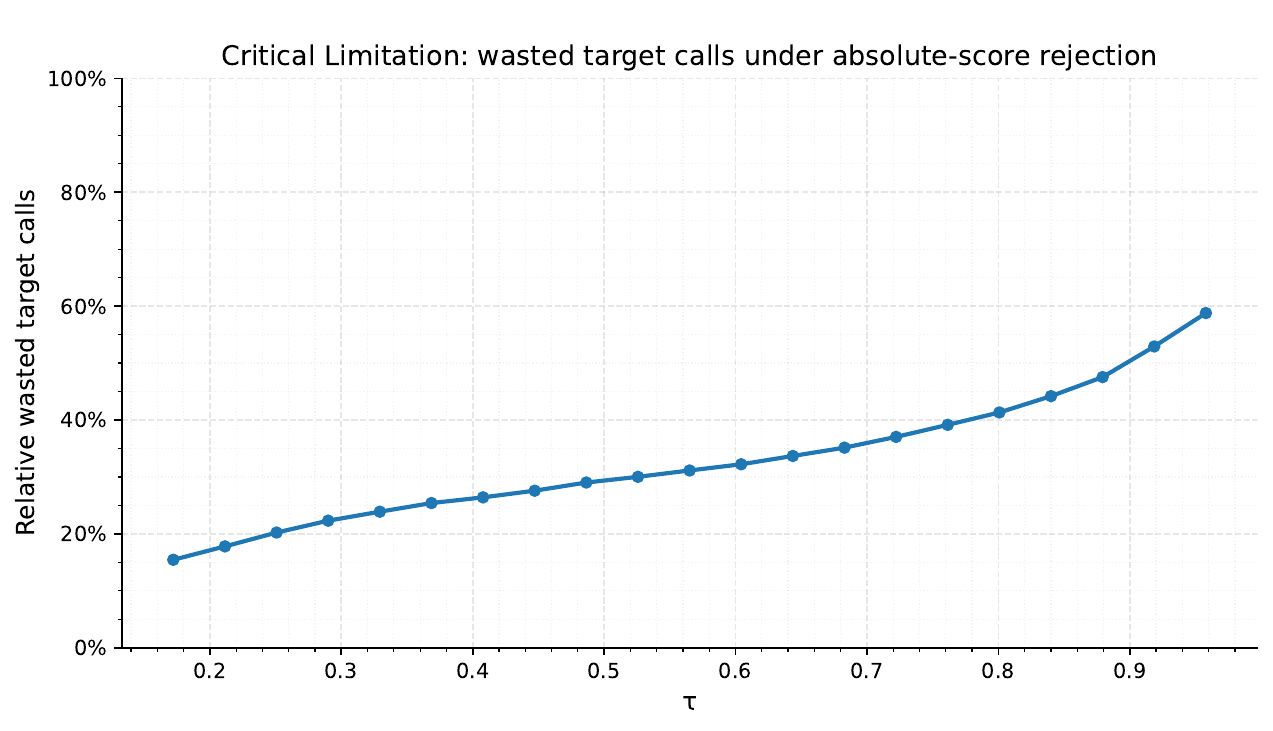}
    \caption{Wasted target calls vs.\ $\tau$}
    \label{fig:rsd_waste_proxy_vs_tau}
  \end{subfigure}
  \caption{Performance of the RSD method as a function of threshold $\tau$.}
  \label{fig:rsd_vs_tau}
\end{figure*}

Here we present a detailed analysis of RSD rejections; see Figure~\ref{fig:rsd_vs_tau}. The figure characterizes how RSD’s rejection behavior varies with the draft-score threshold~$\tau$ on~$s_d$. Among \emph{rejected} steps, the probability that the target improves the PRM score,
$\Pr(\text{step quality improves} \mid \text{step rejected by draft}, \tau)$, stabilizes at roughly~$0.4$ across a wide range of~$\tau$ (Figure~\ref{fig:rsd_outcomes_vs_tau}). This means that in approximately $60\%$ of target invocations there is \emph{no gain} (the step is equal or worse). At the same time, the reject rate $\Pr(s_d \le \tau)$ increases sharply with~$\tau$ (Figure~\ref{fig:rsd_reject_rate_vs_tau}), so absolute-score thresholding rapidly saturates the system with expensive target calls without increasing the likelihood of improvement.

We capture the resulting compute inefficiency via an explicit “wasted target calls’’ metric, derived from our wasted-computation formalization:
\[
\text{reject\_rate}(\tau)\cdot\bigl(1 - \Pr(\text{step quality improves} \mid \text{step rejected by draft}, \tau)\bigr),
\]
which grows monotonically with~$\tau$ (Figure~\ref{fig:rsd_waste_proxy_vs_tau}). Taken together, these trends indicate that when the draft fails due to \emph{shared} error modes with the target (e.g., hard reasoning instances), regenerating with the target often cannot salvage the step. Thus, RSD’s absolute-score rejection pays full target cost while delivering little to no quality benefit, motivating routing policies that depend on the \emph{relative} advantage $(s_t - s_d)$ rather than thresholding $s_d$ in isolation.

\section{Optimality of \OURSORACLE}
\label{sec:proof}
\begin{boxtheorem}[\OURSORACLE optimality under Budgeted Escalation]
\label{thm:oracle}
Given a budget $B\in\{0,\dots,N\}$ on the
number of escalations, consider policies $a=(a_1,\dots,a_N)\in\{0,1\}^N$ with $\sum_{i=1}^N a_i\le B$, where $a_i=1$ means ``escalate to target'' and $a_i=0$ means ``accept draft.'' Then there exists a threshold $\tau\in\mathbb{R}$ such that the policy $a^\star_\tau$ with $a^\star_{\tau,i}=\mathbb{I}\{\Delta_i>\tau\}$ attains
\[
\max_{a:\,\sum a_i\le B}\ \mathcal{S}(a)=\max_{a:\,\sum a_i\le B}\ \sum_{i=1}^N\bigl(s_{d,i}+a_i\Delta_i\bigr).
\]
Because $\sum_i s_{d,i}$ is independent of $a$, this problem reduces to
\[
\max_{a\in\{0,1\}^N}\ \sum_{i=1}^N a_i\Delta_i
\quad\text{s.t.}\quad \sum_{i=1}^N a_i\le B.
\tag{$\star$}
\]
\end{boxtheorem}
\begin{proof}
Suppose $\Delta_j<0$ and $a_j=1$; then setting $a_j\leftarrow 0$
strictly increases the objective while preserving feasibility, since the constraint is an
upper bound. Thus some optimal solution has $a_j=0$ for all $j$ with $\Delta_j<0$. Now, among indices with $\Delta_i>0$, let $a$ be any feasible
solution that includes $j$ but excludes $k$ with $\Delta_k>\Delta_j>0$. Swapping
$a_j\leftarrow 0$ and $a_k\leftarrow 1$ increases the objective by $\Delta_k-\Delta_j>0$ and
keeps feasibility, contradicting optimality. Repeating exchanges yields an optimal solution
that selects the $k=\min\{B,\ \#\{i:\Delta_i>0\}\}$ largest positive $\Delta_i$'s. 

In the constrained case, order $\Delta_{(1)}\ge\cdots\ge\Delta_{(N)}$ and choose
$\tau$ with $\Delta_{(k)}>\tau\ge\Delta_{(k+1)}$ (using $\Delta_{(0)}:=+\infty$,
$\Delta_{(N+1)}:=-\infty$). Then $a^\star_{\tau,i}=\mathbb{I}\{\Delta_i>\tau\}$ selects exactly
those $k$ indices; ties at $\Delta_i=\tau$ can be broken arbitrarily to satisfy the budget.
This achieves the maximum of $(\star)$, hence of $\mathcal{S}(a)$.
\end{proof}
\begin{corollary}[Pareto Frontier]\label{cor:pareto}
Let $\mathcal{C}(a)\in\mathbb{R}_+$ denote computational cost (e.g., \# target calls $=\sum_i a_i$).
As $\tau$ varies, the threshold policies $a^\star_\tau$ trace a cost–quality curve $\bigl(\mathcal{C}(a^\star_\tau),\,\mathcal{S}(a^\star_\tau)\bigr)$ that \emph{majorizes} any other policy's cost–quality pair: for any target cost $c$,
\[
\mathcal{S}(a^\star_{\tau(c)}) \;=\; \sup_{\pi:\,\mathcal{C}(\pi)\le c}\, \mathcal{S}(\pi),
\]
i.e., the oracle threshold family achieves the Pareto frontier of cost versus quality.
\end{corollary}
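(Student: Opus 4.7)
The plan is to reduce the corollary directly to Theorem~\ref{thm:oracle} by identifying cost budgets with escalation budgets. First, I would fix an arbitrary target cost level $c \geq 0$ and note that because $\mathcal{C}(a) = \sum_{i=1}^N a_i$ is integer-valued on $\{0,1\}^N$, the constraint $\mathcal{C}(\pi) \le c$ is equivalent to $\sum_i a_i \le B$ with $B := \min\{\lfloor c \rfloor, N\}$. Thus the feasible set over which we take the supremum on the right-hand side coincides exactly with the feasible set in Theorem~\ref{thm:oracle}.

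Next, I would invoke Theorem~\ref{thm:oracle} with this budget $B$ to obtain a threshold $\tau = \tau(c)$ such that the policy $a^\star_\tau$ with $a^\star_{\tau,i} = \mathbb{I}\{\Delta_i > \tau\}$ attains the maximum of $\mathcal{S}(a)$ over all $a$ with $\sum_i a_i \le B$. Concretely, ordering $\Delta_{(1)} \ge \cdots \ge \Delta_{(N)}$ and letting $k = \min\{B,\ \#\{i : \Delta_i > 0\}\}$, I would choose any $\tau \in [\Delta_{(k+1)}, \Delta_{(k)})$ (with the convention $\Delta_{(0)} := +\infty$, $\Delta_{(N+1)} := -\infty$), breaking ties at $\Delta_i = \tau$ in whatever way is needed to keep $\sum_i a^\star_{\tau,i} \le B$. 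This gives $\mathcal{C}(a^\star_{\tau(c)}) \le c$ and $\mathcal{S}(a^\star_{\tau(c)}) = \max_{\pi : \mathcal{C}(\pi) \le c} \mathcal{S}(\pi)$, which is exactly the Pareto-optimality claim.

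Finally, to justify the ``majorizes'' language, I would note that as $\tau$ sweeps from $+\infty$ down to $-\infty$, the induced escalation set $\{i : \Delta_i > \tau\}$ grows monotonically by absorbing indices in decreasing order of $\Delta_i$, so the pairs $\bigl(\mathcal{C}(a^\star_\tau), \mathcal{S}(a^\star_\tau)\bigr)$ trace out an upper-staircase curve that passes through the optimal value at every integer budget $B \in \{0, 1, \dots, N\}$. Any competing policy $\pi$ with cost $c' = \mathcal{C}(\pi)$ is feasible for the budget $\lfloor c' \rfloor$, hence $\mathcal{S}(\pi) \le \mathcal{S}(a^\star_{\tau(c')})$ by the previous step, which is the desired majorization.

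The main subtlety, rather than a genuine obstacle, is the handling of ties $\Delta_i = \tau$: a naive choice of $\tau$ may make $|\{i : \Delta_i \ge \tau\}|$ exceed $B$. This is resolved by the tie-breaking device already used in the proof of Theorem~\ref{thm:oracle}, which lets us realize any integer cost $B$ exactly; all remaining cost levels $c \in (B, B+1)$ inherit the same optimum as $B$ since $\mathcal{C}$ is integer-valued, so the staircase frontier is fully swept out by the threshold family.
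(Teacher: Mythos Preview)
Your proof is correct and follows the same approach as the paper: fix a cost budget $c$, identify it with an integer escalation budget $B$, and invoke Theorem~\ref{thm:oracle} to obtain a threshold $\tau$ whose policy attains the maximum of $\mathcal{S}$ over all feasible policies. Your version is more detailed than the paper's (which is a terse three sentences), in particular making explicit the $B = \min\{\lfloor c\rfloor, N\}$ correspondence, the monotone sweep as $\tau$ decreases, and the tie-breaking subtlety, but the underlying argument is identical.
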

\begin{proof}
Fix any cost budget $c$ corresponding to some integer $B$. By Theorem~\ref{thm:oracle}, the optimal value among policies with $\sum_i a_i\le B$ is attained by $a^\star_\tau$ for a threshold $\tau$ that selects the $B$ largest $\Delta_i$'s. Thus no policy of cost at most $c$ can exceed $\mathcal{S}(a^\star_\tau)$, establishing the frontier property.
\end{proof}
\begin{corollary}[Zero Wasted Computation under Unconstrained Oracle]\label{cor:zero-waste}
In the unconstrained case, the oracle uses $a^\star_i=\mathbb{I}\{\Delta_i>0\}$.
Then the expected wasted computation, target calls that fail to improve quality, is
\[
\mathbb{E}\bigl[\mathbf{1}\{a^\star_i=1\}\,\mathbf{1}\{s_{t,i}\le s_{d,i}\}\bigr] \;=\; 0.
\]
\end{corollary}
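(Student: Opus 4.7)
The plan is to show that the two indicator random variables whose product appears in the expectation have disjoint supports, so their product is identically zero (not merely zero in expectation). Concretely, I would first unpack the oracle decision rule $a^\star_i = \mathbb{I}\{\Delta_i > 0\}$ using Definition~\ref{def:step_level_advantage}, which says $\Delta_i = s_{t,i} - s_{d,i}$. Thus $\{a^\star_i = 1\} = \{\Delta_i > 0\} = \{s_{t,i} > s_{d,i}\}$.

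Next I would observe that the ``wasted'' event inside the expectation is exactly $\{s_{t,i} \le s_{d,i}\} = \{\Delta_i \le 0\}$, the complement of the escalation event. Hence
\[
\mathbf{1}\{a^\star_i = 1\}\,\mathbf{1}\{s_{t,i} \le s_{d,i}\} = \mathbf{1}\{\Delta_i > 0\}\,\mathbf{1}\{\Delta_i \le 0\} = 0
\]
pointwise on the underlying probability space, since no realization can simultaneously satisfy $\Delta_i > 0$ and $\Delta_i \le 0$. Taking expectation of the identically-zero random variable gives the claim.

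There is essentially no obstacle: the only subtlety worth flagging is the tie-breaking convention at $\Delta_i = 0$, which is handled automatically by the strict inequality in the oracle's definition $a^\star_i = \mathbb{I}\{\Delta_i > 0\}$. If one instead defined the oracle with $\mathbb{I}\{\Delta_i \ge 0\}$, the claim would still hold because on the tie event $\Delta_i = 0$ one has $s_{t,i} = s_{d,i}$, so no genuine waste is incurred; but the strict-inequality formulation used here makes the cancellation most transparent. I would conclude by remarking that this is why the \OURSORACLE serves as a meaningful benchmark: in the unconstrained regime it pays for target computation \emph{only} on steps that are guaranteed (under the PRM) to benefit, setting a zero-waste ideal that the \OURSROUTER approximates in practice.
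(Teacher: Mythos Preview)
Your proposal is correct and follows essentially the same argument as the paper: both observe that $a^\star_i=1$ forces $\Delta_i>0$, i.e.\ $s_{t,i}>s_{d,i}$, so the product of the two indicators is identically zero and its expectation vanishes. Your additional remarks on tie-breaking and the interpretive role of the oracle are fine but go beyond what the paper's one-line proof includes.
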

\begin{proof}
By definition, $a^\star_i=1$ only if $\Delta_i>0$, which implies $s_{t,i}>s_{d,i}$. Hence $\mathbf{1}\{a^\star_i=1\}\,\mathbf{1}\{s_{t,i}\le s_{d,i}\}=0$ and the expectation is zero.
\end{proof}

\begin{remark}[\OURSORACLE Assumption]
The theorem and corollaries optimize the sum of step-local PRM scores, assuming a
decision at step $i$ does not affect future steps' scores.
\end{remark}
\section{Implementation Details}
\label{app:hparams}
Here, we include the training hyperparameters for the router; see Table \ref{tab:hyperparams}.
\begin{table}[hbt!]
\centering
\caption{Experimental hyperparameters.}
\label{tab:hyperparams}
\begin{tabular}{lc}
\toprule
Hyperparameter & Value\\
\midrule
Batch size & 1024 \\
Learning rate & 1e-5  \\
Epochs & 3 \\
\bottomrule
\end{tabular}
\end{table}
\section{\OURS Speculative Generation}
\label{app:algorithm}
The algorithm \ref{alg:arbitrage} details the inference procedure for \OURS, our step-level speculative generation framework. Given a question \(q\), the algorithm iteratively constructs a reasoning trace by alternating between draft generation and dynamic routing decisions. At each iteration, the draft model \(q_{\theta_d}\) auto-regressively generates a candidate reasoning step \(\vz_d\) until it emits a step separator token \(\langle\mathrm{sep}\rangle\) (e.g., \texttt{\textbackslash n\textbackslash n}) or the end-of-sequence token \(\langle\mathrm{eos}\rangle\). This step is then fed, along with the current context \(\vx\), into the lightweight \OURSROUTER \(h_{\theta_{\mathrm{router}}}\), which outputs a scalar \(\hat{y}\) estimating the likelihood that the target model produces a significantly better step. The router’s prediction is compared with a tunable threshold \(\tau\): if \(\hat{y} \leq \tau\), the draft step \(\vz_d\) is deemed sufficient and attached to the context; otherwise, the system falls back to the target model \(q_{\theta_t}\), which regenerates the step from the same prefix \(\vx\), yielding \(\vz_t\). \OURS minimizes redundant computation while preserving (or even enhancing) output correctness. The entire process is fully compatible with standard auto-regressive decoding pipelines and introduces minimal overhead due to the lightweight architecture of the router.

\begin{algorithm}[ht]
\caption{\OURS Inference}
\label{alg:arbitrage}
\DontPrintSemicolon

\KwIn{question $q$; draft model $q_{\theta_d}$; target model $q_{\theta_t}$; 
router $h_{\theta_{\mathrm{router}}}$; threshold $\tau$; step separator 
$\langle\mathrm{sep}\rangle$; end-of-sequence token $\langle\mathrm{eos}\rangle$; 
maximum number of steps $N$}
\KwOut{final decoded sequence $\vx$}

$\vx \gets q$\; \tcp*[r]{Initialize context with the question}

\For{$t \gets 1$ \KwTo $N$}{
  \If{last token of $\vx$ is $\langle\mathrm{eos}\rangle$}{
    \textbf{break}\; \tcp*[r]{finished early}
  }

  \tcp{Draft step}
  Sample draft step $\vz_d \sim q_{\theta_d}(\cdot \mid \vx)$ until the first of
  $\langle\mathrm{sep}\rangle$ or $\langle\mathrm{eos}\rangle$\;

  \tcp{Routing decision}
  $\hat{y} \gets h_{\theta_{\mathrm{router}}}(\vx, \vz_d)$\;

  \If{$\hat{y} \le \tau$}{
    $\vz^\star \gets \vz_d$\; \tcp*[r]{accept draft}
  }
  \Else{
    Sample target step $\vz_t \sim q_{\theta_t}(\cdot \mid \vx)$ until the first of
    $\langle\mathrm{sep}\rangle$ or $\langle\mathrm{eos}\rangle$\;
    $\vz^\star \gets \vz_t$\; \tcp*[r]{escalate to target}
  }

  \tcp{Append chosen step}
  $\vx \gets \vx \,\Vert\, \vz^\star$\;

  \If{last token of $\vx$ is $\langle\mathrm{eos}\rangle$}{
    \textbf{break}\;
  }
}

\Return $\vx$\;
\end{algorithm}

\section{Additional Results}
\label{app:additional_results}
We provide additional evaluation results that complement
Section~\ref{sec:algo_results}; see Figure~\ref{fig:additional_results}.

\begin{figure*}[t]
  \centering
  \setlength{\tabcolsep}{2pt}
  \begin{tabular}{cc}
    \includegraphics[width=0.48\textwidth]{figures/llama_family_acc_acp/llama_small_math.pdf} &
    \includegraphics[width=0.48\textwidth]{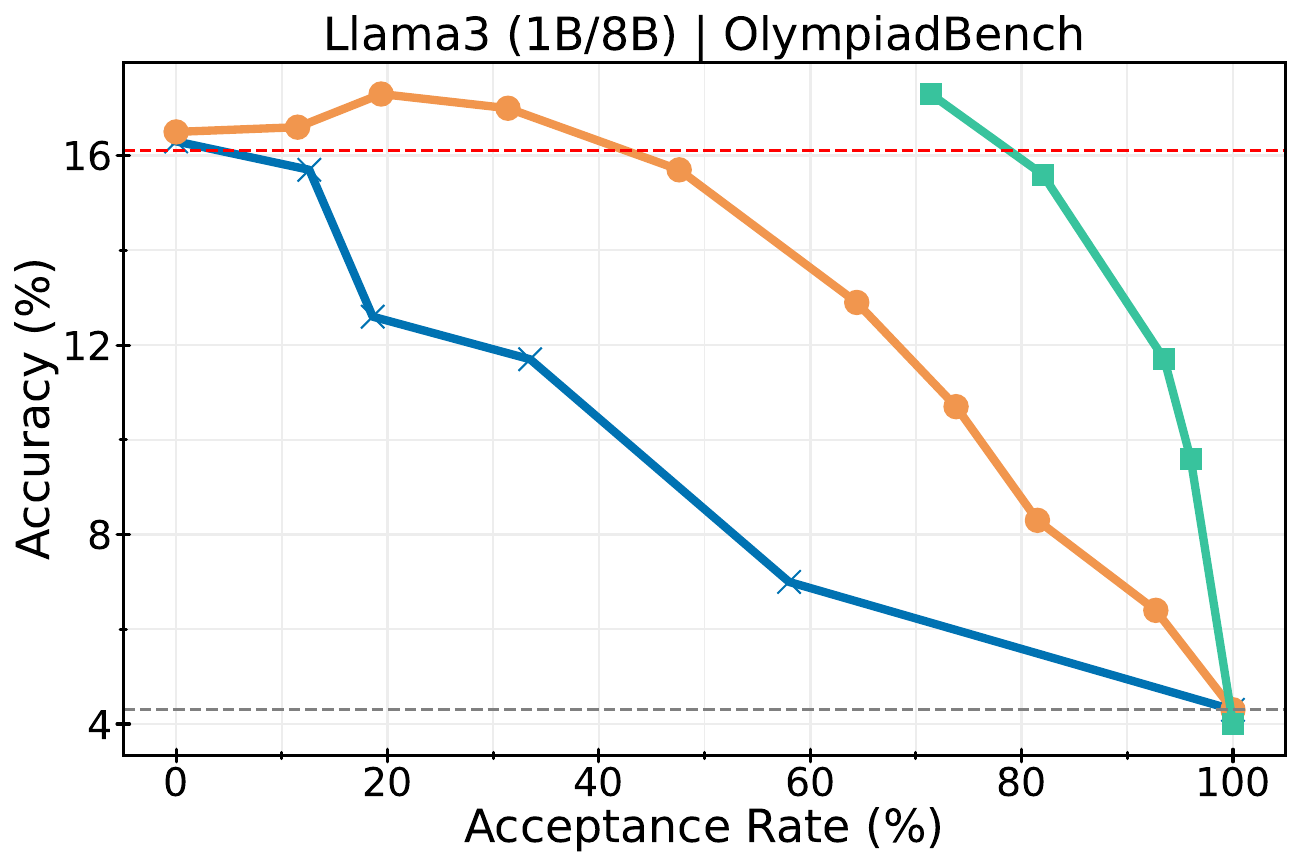} \\[2pt]
    \includegraphics[width=0.48\textwidth]{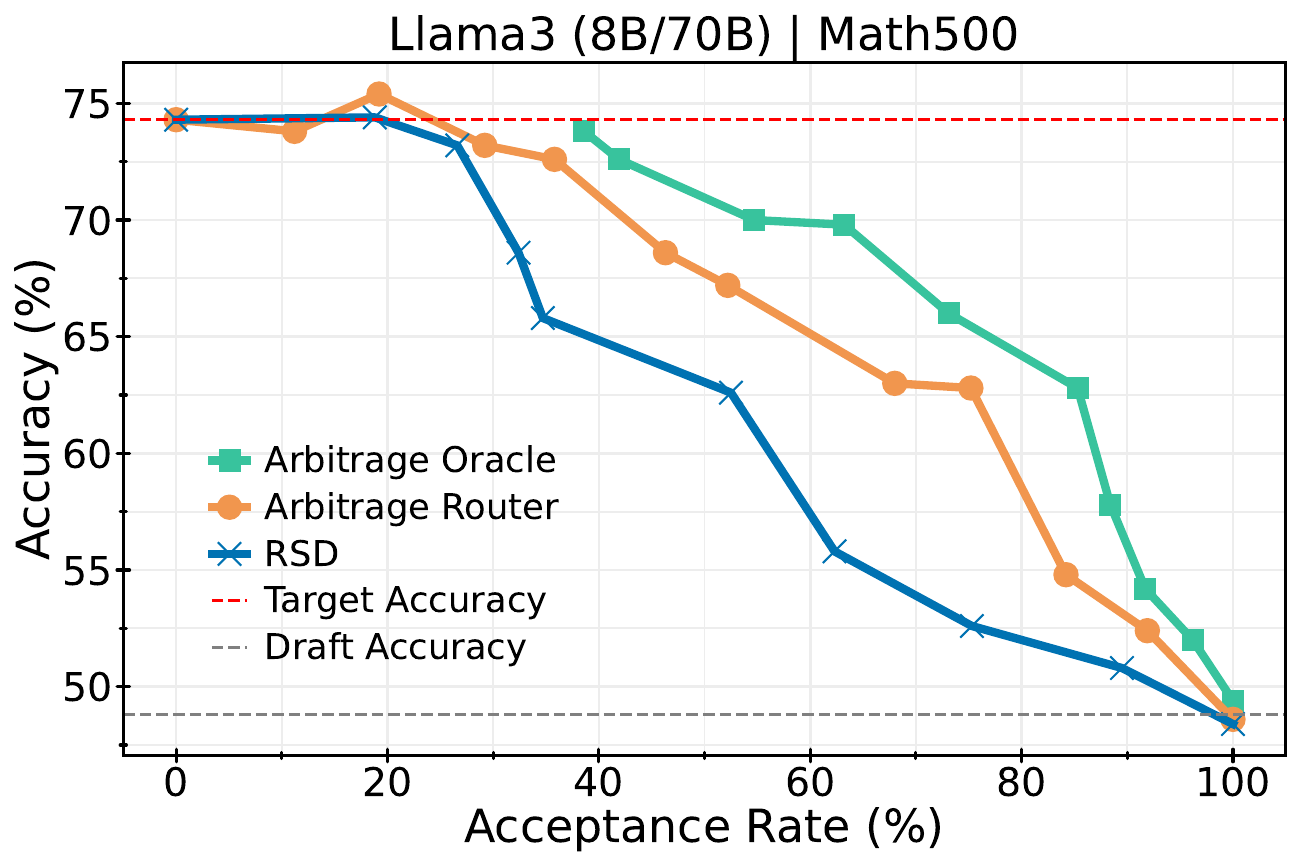} &
    \includegraphics[width=0.48\textwidth]{figures/llama_family_acc_acp/llama70b_olympiad.pdf} \\[2pt]
    \includegraphics[width=0.48\textwidth]{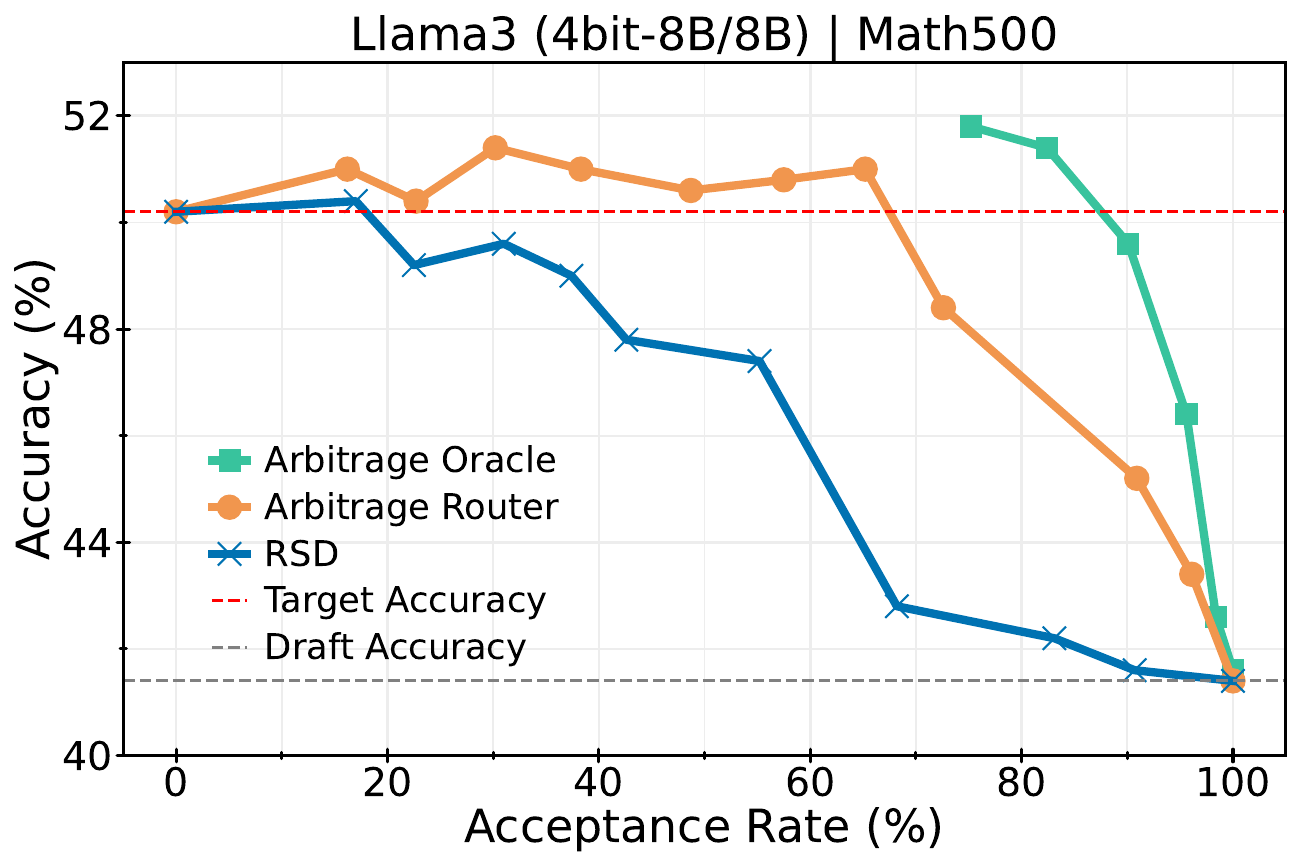} &
    \includegraphics[width=0.48\textwidth]{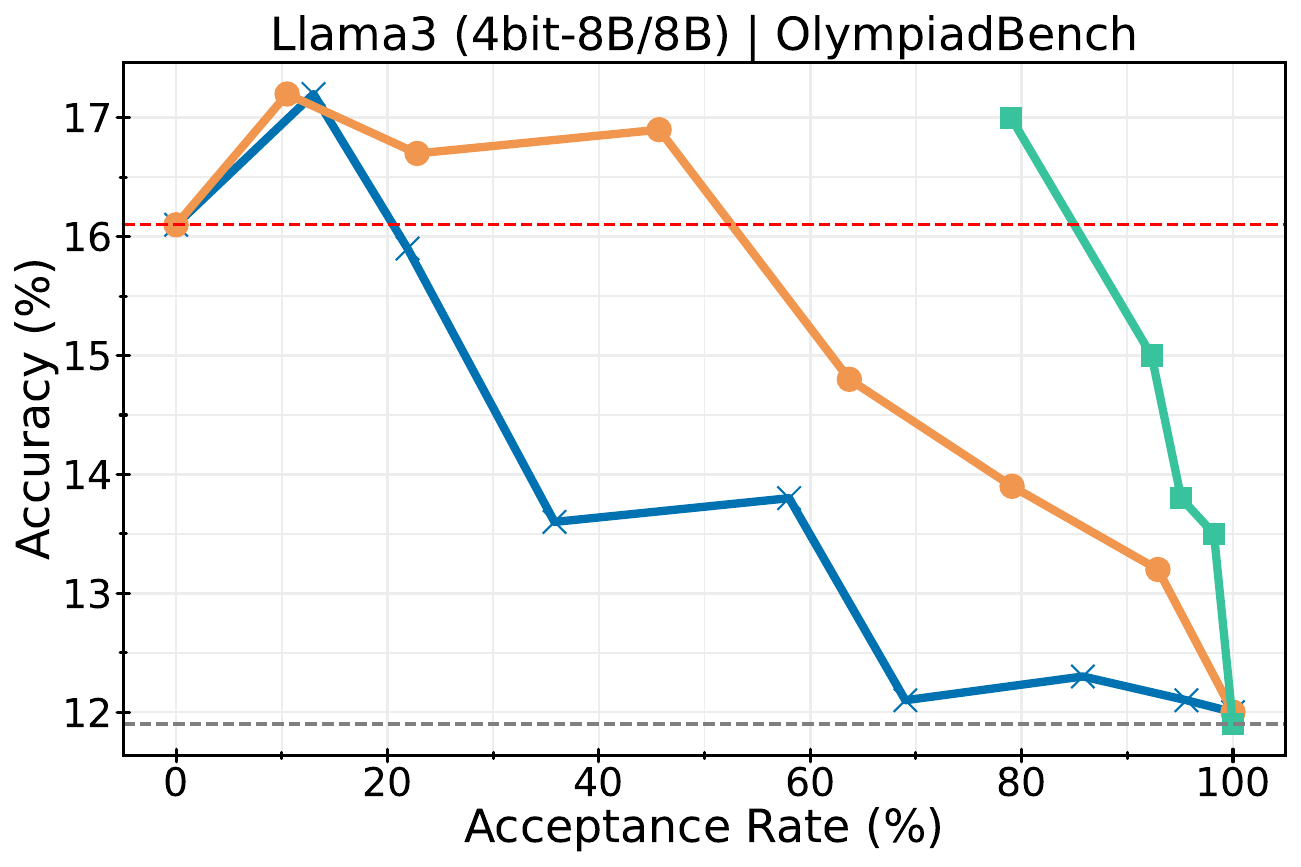} \\[2pt]
    \includegraphics[width=0.48\textwidth]{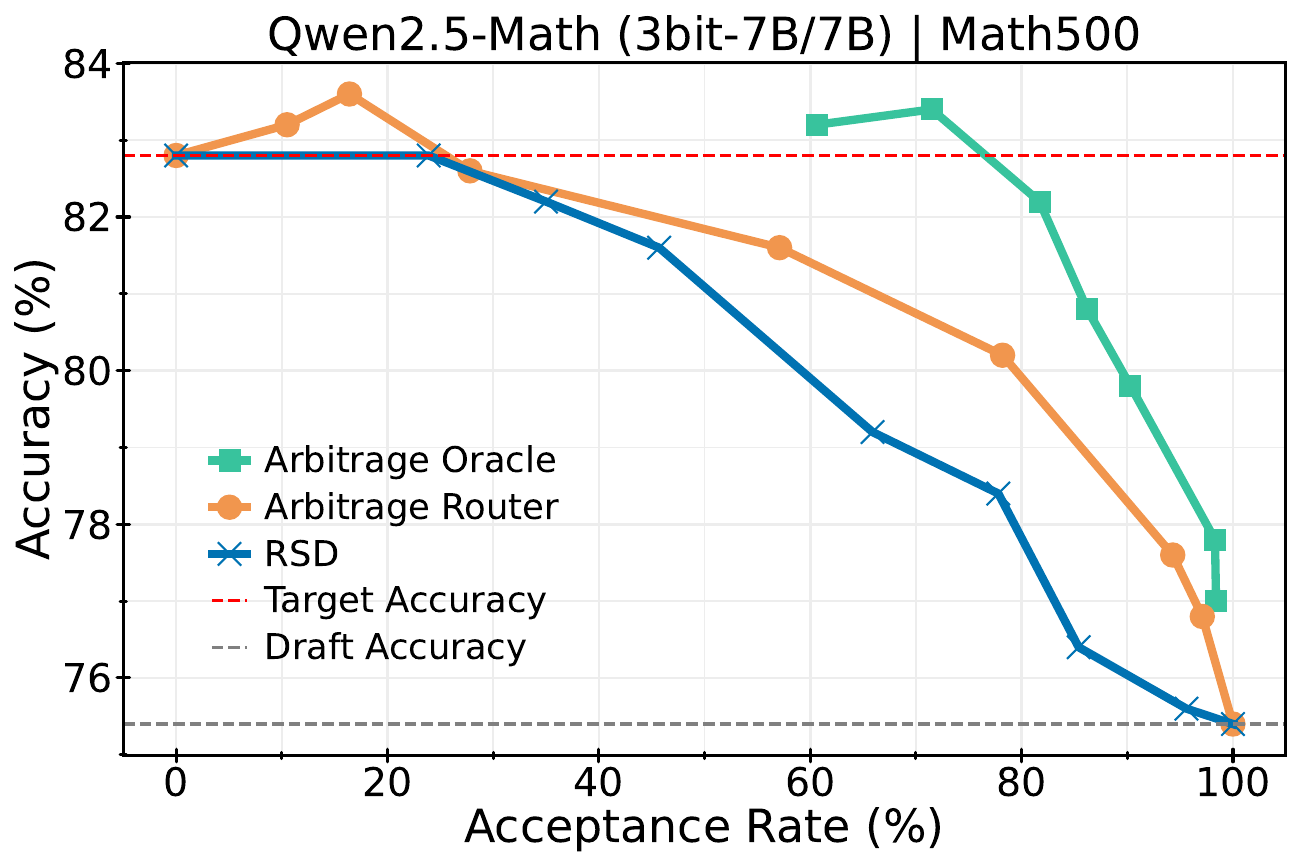} &
    \includegraphics[width=0.48\textwidth]{figures/qwen_family_acc_acp/qwen_q_olympiad.pdf}
  \end{tabular}
  \caption{
    \textbf{Additional compute--quality results.}
    Accuracy versus acceptance rate for \OURSORACLE, \OURSROUTER, and RSD on
    Math500 (left column) and OlympiadBench (right column) across the LLaMA3
    and Qwen2.5-Math model families. In all settings, \OURS achieves higher
    accuracy for a given acceptance rate, indicating a more favorable
    compute--quality trade-off.%
  }
  \label{fig:additional_results}
\end{figure*}

\end{document}